\renewcommand\bibsection%
\title{Multi-robot Symmetric Rendezvous Search\\ on the Line}
\author{Deniz Ozsoyeller\inst{1} \and Pratap Tokekar\inst{2}}
\institute{Department of Software Engineering, Yaşar University, Izmir, Turkey. \email{deniz.ozsoyeller@yasar.edu.tr}  \and Department of Computer Science, University of Maryland, College Park, MD, USA. \email{tokekar@umd.edu}}
\def\boun{boun\textsubscript{t}}
\def\notM{\overline{M}}
\def\notS{\overline{S}}
\def\MSR{\mathcal{MSR}}
\def\dir{dir_{1}}
\def\dird{dir_{d}}
\def\notdir{\overline{dir_{1}}}
\newcommand{\Pee}[0]{\ensuremath{{\mathbb P}}}
\newcommand{\Ee}[0]{\ensuremath{{\mathbb E}}}
\begin{document}

\maketitle
\begin{abstract}
We study the Symmetric Rendezvous Search Problem for a multi-robot system. There are $n>2$ robots arbitrarily located on a line. Their goal is to meet somewhere on the line as quickly as possible. The robots do not know the initial location of any of the other robots or their own positions on the line. 
% Furthermore, they are unaware of the initial positions of their own and each other. 
The symmetric version of the problem requires the robots to execute the same search strategy to achieve rendezvous. 
Therefore, we solve the problem in an online fashion with a randomized strategy. In this paper, we present a symmetric rendezvous algorithm which achieves a constant competitive ratio for the total distance traveled by the robots. We validate our theoretical results through simulations.
\keywords{Multi-robot systems \and Rendezvous search \and Symmetric rendezvous \and Online planning}
\end{abstract}

\section{Introduction}
\label{sec:intro}

There are various examples of the rendezvous search problem in real life: two friends who are separated in a shopping mall and want to meet again; a group of parachutists who wants to meet after landing in a large field; a rescue helicopter looking for a lost hiker in the desert \cite{thomas1997searching}. The common challenges in each example are that the searchers are unaware of the location of the others and no common meeting point has been decided a priori. As such, the searchers need to execute an online rendezvous search to meet at a common location as quickly as possible.

In this paper, we study the robotic version of rendezvous search problem: a team of robots whose locations are unknown to each other should meet somewhere in the environment in the least  time possible. The robots have limited sensing capabilities and are operating in a large environment. The robots can only communicate when they are in close proximity of each other; i.e., the communication is only possible when the robots meet each other. There are many reasons why the robots may need to rendezvous. For example, in a scenario where the robots need share information with the others urgently, a rendezvous algorithm would lead to better coordinated planning~\cite{meghjani2012multi}.

There are two primary versions of the rendezvous search problem, depending on whether or not the robots can meet in advance of the search to agree on
the strategies that they will execute. In \emph{asymmetric} rendezvous search, the robots can meet in advance and choose distinct roles for each robot. For example, one robot can wait while the other carries out an exhaustive search. This is different from \emph{symmetric} rendezvous search, where the robots execute the same strategy, since they do not have a chance to agree on their roles. In this version, it is not necessary to implement a different strategy on each robot; thus, this makes it more appealing for programming large multi-robot systems.

In this paper, we study the symmetric version of the rendezvous search problem with $n > 2$ robots that are arbitrarily located on a line.
For example, robots may be deployed in a linear environment such as a road, a corridor, a river, or a tree row.
We consider a scenario where the robots are unaware of their own position along the line or the positions of any of the other agents. In fact, we consider the even more challenging scenario where even the initial distance between any pair of robots is unknown. We also do not assume that the robots know the directions leading to the other robots (i.e., a robot does not know whether the other robots are to its left, or right, or both). Each robot can only keep track of their own positions relative to their own starting positions. 
% We consider that the initial distance between any pair of robots is unknown. Moreover, the robots do not know their own positions and that of the other robots. 
% \PRT{I changed this a bit. Please check if its correct.}
% They are also unaware of the directions leading to each other.
In the absence of any prior knowledge and global information, we propose an online strategy to be followed by all the robots. The strategy involves making random choices in the directions to move to break the symmetry of the search. 

We analyze its performance using the notion of competitive ratio. The competitive ratio of an online algorithm is the worst-case ratio of the cost of the solution (i.e., distance traveled by the robots) found by the online algorithm to the cost of an optimal offline solution. For omniscient robots, the optimal offline algorithm would be to move toward the midpoint of the line segment with endpoints at the positions of the leftmost and rightmost robot. If an online algorithm has a constant competitive ratio, then it means it performs competitively with respect to omniscient robots even in the absence of prior knowledge.

The rendezvous search problem has been extensively studied in literature for linear search environments with the focus mostly on two-players rendezvous~\cite{anderson1995rendezvous, baston1999note, han2008improved, stachowiak2009asynchronous, baston1998rendezvous, ozsoyeller2013symmetric,alpern1995rendezvous,
gal1999rendezvous, alpern1999asymmetric, alpern2000pure}.
% \PRT{Would be good to cite a paper or two for 2-player rendezvous here} 
The studies on multi-player rendezvous \cite{lim1996minimax, lim1997rendezvous, gal1999rendezvous} assume that the initial distances between the robots are known and the robots are placed equidistant apart. For the symmetric version, it is also assumed that when two players meet, they may exchange any information known to them at the time. In contrast, we do not make any of these assumptions in our study. Our main contribution in this paper is to study the symmetric version of the multi-robot rendezvous problem with unknown initial distance between any pair of robots. We present a randomized symmetric rendezvous algorithm which yields a constant competitive ratio.

% The paper is organized as follows. We discuss the relevant works in Section~\ref{sec:related}. We present the formulation of the multi-robot rendezvous search problem in Section~\ref{sec:formulation} and  our algorithm in Section~\ref{sec:msralgorithm}. In Section~\ref{sec:msranalysis}, we analyze its performance. Finally, we provide the concluding remarks in Section~\ref{sec:conclusion}. 

\section{Related Work}
\label{sec:related}

The rendezvous search problem generalizes the \emph{linear search problem} in which a single searcher tries to find a stationary object located at an unknown initial distance. This problem is also known as the \emph{lost cow problem} and was originally solved in \cite{beck1970yet}. That solution was rediscovered in \cite{baezayates1993searching}. In the formulation of the lost cow problem, a near-sighted cow searches for the only gate in a long, straight fence. This gate is located at an unknown initial distance $d$, possibly to the left or right of the cow. The LostCow solution consists of the cow alternately searching to its left and then to its right starting from its initial location and doubling the distance it travels in each round until it finds the gate. This algorithm has a competitive ratio of $9$ which is the best possible performance for a deterministic online algorithm. Beck and Newman \cite{beck1970yet} showed that introducing some randomness reduces this competitive ratio to $4.591$. The same result was also provided by Kao et al. \cite{kao1996searching}. Chrobak et al.~\cite{chrobak2015group} extended the lost cow problem to $k$ cows (termed Mobile Entities) and showed that, independent of the number of cows, the group search cannot be better than the LostCow algorithm's performance. Czyzowicz et al. \cite{czyzowicz2019search} consider the problem of parallel search of a motionless target at an unknown location on a line, by $n$ robots. At most $f$ of these robots are considered to be faulty, and the remaining robots are reliable.
% \PRT{maybe add a line telling what the result for this last problem is?}

Two-player rendezvous search problem on the line has been well studied for both the
symmetric~\cite{anderson1995rendezvous,
baston1999note, han2008improved,
stachowiak2009asynchronous, baston1998rendezvous, ozsoyeller2013symmetric} and the
asymmetric~\cite{alpern1995rendezvous, baston1998rendezvous,
gal1999rendezvous, alpern1999asymmetric, alpern2000pure} versions. The symmetric version can be categorized according to whether the initial distance between the players is known \cite{alpern1995rendezvous, anderson1995rendezvous, baston1999note, han2008improved, stachowiak2009asynchronous} or unknown \cite{baston1998rendezvous, ozsoyeller2013symmetric, czyzowicz2018linear}. Baston and Gal \cite{baston1998rendezvous} considered the case where the initial distance $2d$ between the players is drawn from an unknown distribution with only  its expected value known to be $E[2d] = \mu$. They provided a symmetric
algorithm with expected meeting time $13.325\mu$ and competitive ratio $26.650$. In our previous work \cite{ozsoyeller2013symmetric}, we improved this result by providing a 17.686 distance-competitive and 24.843 time-competitive symmetric rendezvous algorithm for two robots at an unknown initial distance. In this paper, we extend this study to $n > 2$ robots.

The solutions provided for the multi-player linear rendezvous search problem in the literature assume that the distance between each pair of adjacent robots is known; in contrast, we consider it to be unknown. Lim et al. \cite{lim1997rendezvous} studied the rendezvous of $m \leq n$ blind, unit speed players. 
% The players are placed with a random permutation onto the integers 1 to $n$ on the line initially pointing randomly to the right or left. 
The authors showed that $R^{a}_{3, 2}$ is 47/48 and $R^{s}_{n, n}$ is asymptotic to $n/2$, where $R^{a}_{n, m}$ and $R^{s}_{n, m}$ denote the least expected rendezvous time of $m$ players for the asymmetric and symmetric strategy, respectively. Prior to this study, Lim and Alpern focused on the asymmetric version of the same problem and minimizing the maximum time to rendezvous rather than the expected time \cite{lim1996minimax}. The asymmetric value of the $n$-player minimax rendezvous time $M_n$ has an upper bound $n/2 + (n/ \log n) + o(n/ \log n)$. Gal \cite{gal1999rendezvous} presented a simpler strategy for the problem in \cite{lim1996minimax} and showed that the worst case meeting time has an asymptotic behavior of $n/2 + O(\log n)$. The rendezvous of only three agents is considered in \cite{baston1999note, alpern2002rendezvous}. Baston \cite{baston1999note} proposed a strategy that has a maximum rendezvous time of 3.5. Alpern and Lim \cite{alpern2002rendezvous} showed that Baston's strategy is the only strategy to achieve this result.

The deterministic rendezvous of two asynchronous agents on the line was studied in \cite{marco2006asynchronous, stachowiak2009asynchronous, czyzowicz2018linear}. To break the symmetry between the agents, \cite{marco2006asynchronous} considered that each agent has a distinct identifier, called label. A label is a nonempty binary string which is known to the agent. The cost of the proposed algorithm is $O(D|L_{min}|^{2})$ when $D$ is known and $O((D+|L_{max}|)^3)$ when $D$ is unknown, where $D$ is the initial distance between the agents, and $|L_{min}|$ and $|L_{max}|$ denote the lengths of the shorter and longer label of the agents, respectively. This bound was later improved by Stachowiak \cite{stachowiak2009asynchronous}.

The rendezvous search problem was also studied in other types of environments such as a circle \cite{georgiou2019symmetric, dessmark2006deterministic}, a graph \cite{marco2006asynchronous, thomas2001many, miller2016time, czyzowicz2012meet, dessmark2006deterministic, czyzowicz2012forget}, and a plane \cite{collins2010tell, bouchard2019asynchronous, anderson2001two, alpern2006rendezvous, bampas2010almost, ozsoyeller2019rendezvous}.

A generalization of the rendezvous search problem is the \emph{Gathering} problem \cite{flocchini2005gathering, klasing2010taking, cieliebak2002gathering, cieliebak2003solving, prencipe2005feasibility, jurek2009gathering, agmon2006fault, flocchini2013rendezvous, gordon2004gathering}. Gathering requires two or more robots in an environment to co-locate at one point in finite time. The robots operate in Wait-Look-Compute-Move cycles in which they decide on their moves viewing their surroundings and analyzing the configuration of robot locations. In contrast, rendezvous search assumes extremely limited sensing capabilities, for example only collision detection, hence the solution methods do not involve computing the positions of the other robots in the environment. 

\section{Problem Formulation}
\label{sec:formulation}

We consider $n > 2$ identical robots that are placed arbitrarily on a line. The initial distance between every pair of robots is unknown. Moreover, the robots are unaware of their own initial positions and the initial positions of the others. We assume that each robot knows the number $n$.
Rendezvous can occur anywhere on the line, since no rendezvous location is fixed in advance of the search. We study the synchronous case in which we assume that the robots have synchronized clocks, thus, they start executing the algorithm at the same time.

Let $x_{j}$ denote the initial position of robot $R_j$ in order from left to right, where $j \in {1, ..., n}$. Let $d$ be initial distance between the leftmost robot $R_1$ and the rightmost robot $R_n$. Without loss of generality, $R_1$ is located at $x = 0$ and $R_n$ is located at $x = d$. Based on the initial configuration of the robots, $R_1$ and $R_n$ are called the boundary robots and the robots that are located between $R_1$ and $R_n$ are called the internal robots. It is easy to see that each boundary robot has one neighbor while each internal robot has two neighbors.

We focus on minimizing the distance competitive
ratio of our algorithm. Due to using symmetric strategies, the expected distance traveled is identical for all robots. Here the expectation is taken over the randomization in the strategy and over the number of robots. Therefore, we only analyze it for one robot.
\section{Multi-robot Symmetric Rendezvous Algorithm}
\label{sec:msralgorithm}

In this section, we present the multi-robot symmetric rendezvous algorithm $\MSR$. Since the search is on the line, we maintain the invariant that the robots never cross paths. This implies that once the boundary robots ($R_1$ and $R_n$) meet, then the algorithm terminates, since all robots would meet. Therefore, the main idea in $\MSR$ is to find out if any robot pair can act as boundary robots and meet the rest of the robots to achieve rendezvous.

We assign a non-negative sequence $f_{-1}, f_0, f_1, f_2, \ldots$ to each robot, where $f_{-1}=0$ and $f_i = r^{i}$, for $i \geq 0$. Robots use the same expansion radius $r > 1$ which is fixed throughout the algorithm. We derive the optimized value of $r$ in the proof of Theorem~\ref{theo:MSR}. The algorithm proceeds in rounds indexed by integers $i \geq 0$. Each round consists of two consecutive phases: phase-1 and phase-2. The first direction the robot performs the search in round $i$ refers to phase-1. We denote this direction by $\dir$. In phase-2 of round $i$, the robot searches the opposite direction of $\dir$, which is denoted by $\notdir$. Let the initial distance between $R_1$ and $R_n$ be $d = r^{k+\delta}$, where $\delta\in(0,1]$ and $k\in\mathbb{Z^+}$. Note that $\delta$ and $k$ are unknown to the algorithm.

Our online algorithm is a hybrid of two search modes: randomized search and deterministic search. The robot executing $\MSR$ starts by performing a randomized search to break the symmetry but switches to a deterministic search once it meets a robot. A robot is called \emph{single} if it has not met any other robot yet. Thus, at the beginning of the algorithm, all robots are single. As long as the robot is single, it executes the randomized search mode in which it flips a coin at the beginning of each round to decide its itinerary. If it tosses heads, then it moves right in phase-1 and left in phase-2; otherwise it moves left in phase-1 and right in phase-2. Rendezvous can never occur if the coin flips of all robots in each round after the start come up the same. This also implies that the robots will always be executing the randomized mode of the algorithm. The randomized mode adopts the symmetric rendezvous algorithm $SR$ presented in our previous work for two robots \cite{ozsoyeller2013symmetric}.

When the single robot meets another single robot, it switches to the deterministic search mode. In this mode, the robot does not randomize its direction at the beginning of a round. Instead, it always starts the round in its deterministic direction $\dird$. If the single robot meets another single robot in phase-1 of round $i$, then $\dird = \notdir$. However, if the single robot meets another single robot in phase-2 of round $i$, then $\dird = \dir$. Once the robot is in the deterministic search mode, its search becomes similar to the one in the \emph{LostCow} algorithm, with the difference that in our algorithm, the robot uses the first direction it moves in a round (phase-1) to search for any undiscovered mobile robots instead of a stationary object while it uses the second direction it moves (phase-2) to see whether the rest of the robots are gathered.

We now describe the motion pattern of robot $R_j$. In round $i$, $R_j$ starts at one of $x=x_j \pm f_{2i-1}$. If the robot is to first search the right direction in round $i$, then it moves right to $x=x_j+f_{2i}$ in phase-1 and left to $x=x_j-f_{2i+1}$ in phase-2. If the robot is to first search the left direction in round $i$, then it moves left to $x=x_j-f_{2i}$ in phase-1 and right to $x=x_j+f_{2i+1}$ in phase-2. Assuming that $R_j$ does not meet any robot during round $i$, its possible ending position is $x=x_j \pm f_{2i+1}$. This is also the initial position for round $i+1$. The possible itineraries of $R_j$ based on its initial position in round $i$ are shown in Fig.~\ref{motionpat}.

\begin{figure}
\centering
\psfrag{R1}[bl][c][0.8][0]{\small{$\dir(i) =$ right}}
\psfrag{R2}[bl][c][0.8][0]{\small{$\dir(i-1) =$ left}}
\psfrag{R3}[bl][c][0.8][0]{\small{$\dir(i) =$ right}}
\psfrag{R4}[bl][c][0.8][0]{\small{$\dir(i-1) =$ right}}
\psfrag{L1}[bl][c][0.8][0]{\small{$\dir(i) =$ left}}
\psfrag{L2}[bl][c][0.8][0]{\small{$\dir(i-1) =$ right}}
\psfrag{L3}[bl][c][0.8][0]{\small{$\dir(i) =$ left}}
\psfrag{L4}[bl][c][0.8][0]{\small{$\dir(i-1) =$ left}}
\psfrag{rj}[bl][c][0.8][0]{$R_{j}$}
\psfrag{I1}[bl][c][0.8][90]{\small{$x_j$}}
\psfrag{A}[bl][c][0.8][0]{\small{$x_j+f_{2i-1}$}}
\psfrag{B}[bl][c][0.8][0]{\small{$x_j+f_{2i}$}}
\psfrag{C}[bl][c][0.8][0]{\small{$x_j+f_{2i+1}$}}
\psfrag{-A}[bl][c][0.8][0]{\small{$x_j-f_{2i-1}$}}
\psfrag{-B}[bl][c][0.8][0]{\small{$x_j-f_{2i}$}}
\psfrag{-C}[bl][c][0.8][0]{\small{$x_j-f_{2i+1}$}}
\includegraphics[width=0.8\columnwidth]{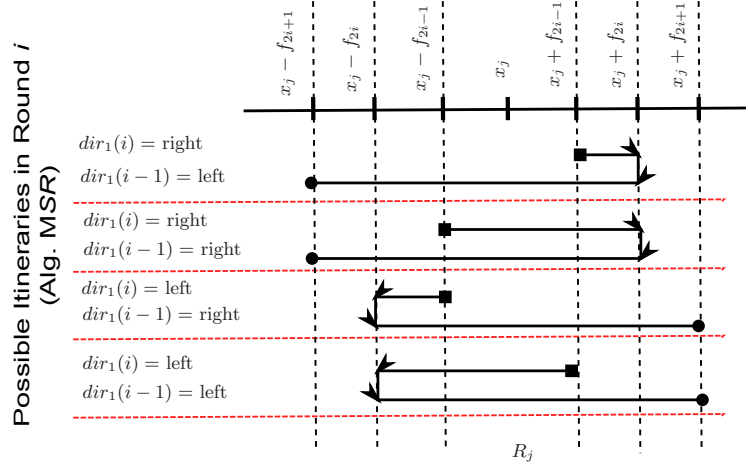} %[width=0.9\columnwidth]
\caption{The possible itineraries of robot $R_{j}$ executing algorithm $\MSR$. The arrows show the directions that the robot is moving in round $i$. The small square and circle icons show the beginning and end of a motion, respectively.}
\label{motionpat}
\end{figure}

In our algorithm, the robot can also be in one the following additional states beyond the initial \emph{single} state: $\boun$ and \emph{internal}. We determine these states based on the fact that an internal robot can meet another robot in both directions since it has two neighbors while a boundary robot can meet another robot in only one direction since it has only one neighbor. However, in $\MSR$, although the robot in \emph{internal} state is the actual internal robot in the initial configuration, the robot in $\boun$ state only acts as one but does not have to be the actual boundary robot. Thus, we refer $\boun$ as temporary boundary.

Next, we introduce the behaviours of the robot that meets another robot in a round. When single robots $R_j$ and $R_{j'}$ meet in round $i$, they both change their states to $\boun$. Recall that this is the last round before both robots switch to the deterministic search mode. Starting from round $i+1$, the $\boun$ robot $R_j$ no longer flips a coin but always starts a round in $\dird$. The robot in this state continues the search until it meets another $\boun$ robot in $\dird$. When this happens, it changes its state to internal and waits until it sticks to a $\boun$ robot. A sequence of consecutive internal robots is always surrounded by $\boun$ robots which carry out the search in both directions. This also implies that both neighbors of an internal robot can be internal, or one of its neighbors can be internal while the other is $\boun$. Thus, a single robot can never encounter an internal robot. It is easy to see that the robots $R_1$ and $R_n$ can never be internal since they each have only one neighbor, hence can meet the other only in one direction. A $\boun$ robot collects any single robot moving towards it. In other words, the single robot always sticks to and moves along with the $\boun$ robot that it meets. Therefore, the neighbors of a robot at the initial configuration can change when the robot changes its state to $\boun$. For example, consider the consecutive robots $R_{j-1}$, $R_{j}$, and $R_{j+1}$ at the beginning. Although $R_{j+1}$ is not initially adjacent to $R_{j-1}$, if $R_{j}$ sticks to $R_{j+1}$, then $R_{j+1}$ becomes adjacent to $R_{j-1}$. The pseudocode of $\MSR$ is presented in Algorithm~\ref{alg:msr} and Algorithm~\ref{alg:msrfunc}, and its finite state representation is shown in Fig. \ref{fig:fsm}.

\begin{figure}
\centering
\includegraphics[width=\columnwidth]{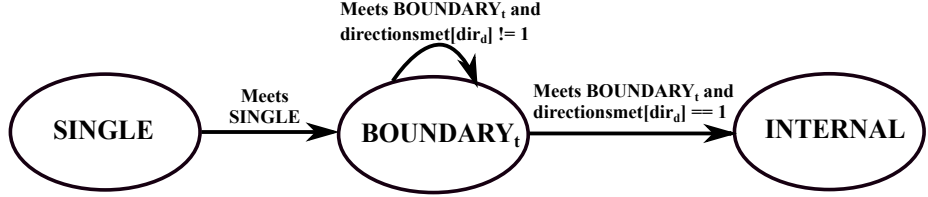}
\caption{Finite state machine representation of Algorithm $\MSR$.}
\label{fig:fsm}
\end{figure}

Let $D_{i,1}(D_{i,2})$ denote the distance traveled in phase-1(phase-2) of round $i$. If the direction in phase-1 of this round differs from the previous round, then $D_{i,1} = f_{2i}-f_{2i-1}$. Otherwise, $D_{i,1} = f_{2i}+f_{2i-1}$. Regardless of the direction in phase-1, $D_{i,2} = f_{2i}+f_{2i+1}$. Thus, the distance traveled (the length of an itinerary) in round $i$ is either $D_i = f_{2i+1} + 2f_{2i} - f_{2i-1}$ or $D_i = f_{2i+1} + 2f_{2i} + f_{2i-1}$.
The maximum total time required for round $i$ is $T_{i} = T_{i,1} + T_{i,2}$, where $T_{i,1} = f_{2i}+f_{2i-1}$ and $T_{i,2} = f_{2i}+f_{2i+1}$.

In the synchronous case, robots start each phase of a round at the same time. This
is managed by introducing waiting times in $\MSR$ algorithm. The waiting
time in a phase is determined by the following cases: (1) The robot does not
meet another robot. (2) The robot meets another robot but continues moving
without waiting at the met location, which occurs when a \boun robot meets a single or an internal robot. (3) The robot meets another robot and
waits at the met location till the end of the phase, which occurs when two \boun or two single robots meet.
If case-1 or case-2 occurs, then $W_{i,1} = T_{i,1} - D_{i,1}$ and $W_{i,2} = T_{i,2} - D_{i,2}$.
Let $D_{i,1,m}$($D_{i,2,m}$) denote the distance traveled by the robot until it meets another robot in phase-1(phase-2) of round $i$. Since any robot pair meets before they reach their ending positions in the phase, we have $D_{i,1,m}(D_{i,2,m}) < D_{i,1}(D_{i,2})$. If case-3 occurs, then $W_{i,1} = T_{i,1} - D_{i,1,m}$ and $W_{i,2} = T_{i,2} - D_{i,2,m}$.

\DontPrintSemicolon
\LinesNumbered

\begin{algorithm}
\algsetup{linenosize=\small}
%\scriptsize
\SetAlgoLined
\caption{$\MSR$, Multi-robot symmetric rendezvous search algorithm.}
\label{alg:msr}
%\begin{algorithmic}[1]
{\textbf{global}\;
\tcp{$\mathcal{S}$: SINGLE; $\mathcal{B}$: BOUNDARY\textsubscript{t}}
\tcp{$\mathcal{I}$: INTERNAL; $\mathcal{N}$: NOROBOT}
\quad $state = \mathcal{S}$\;
\quad $meets = \mathcal{N}$\;
\quad $stick = 0$ \;
\quad $firstmeeting = 0$\;
\quad $direction = 1$\;
\quad $directionsmet[2] = \{0\}$\;
\quad RIGHT $= 1$, LEFT $= -1$\;
\quad HEAD $= 0$, TAIL $= 1$\;
\textbf{end global}\;
\;
}
\tcp{$i$ denotes the current round.}
$i = 0$\;
$phase = 1$\;
$continues =$ \textbf{\upshape true}\;
\While{{\upshape Rendezvous()} $\neq$ \textbf{\upshape true}}{
    \If{$state ==$ {\upshape $\mathcal{S}$}}{
        $cointoss =$ Random(HEAD, TAIL)\;
        $direction =$ RIGHT\; %\tcp*{first move right}
        \If{$cointoss ==$ {\upshape TAIL}}{
        		$direction =$ LEFT\; %\tcp*{first move left}
        }
    }
    \If{$stick \neq 1$}{
		\If{$state \neq$ {\upshape $\mathcal{I}$}}{
			$phase = 1$\;
            $continues =$ MoveTo($direction * f(2i)$, $direction$, $phase$, $i$)\;
            \If{$continues ==$ \textbf{\upshape true}}{
            	$phase = 2$\;
                $continues =$ MoveTo($-1 * direction * f(2i+1)$, $-1*direction$, $phase$, $i$)\;
                \If{$continues ==$ \textbf{\upshape true}}{
                    $i = i + 1$\;
                }
            }
        }
        \If{$state ==$ {\upshape $\mathcal{I}$}}{
        		WaitInternal()\;
        }
    }

    \If{$stick == 1$}{
	   StickToRobot()\;
    }
}
\end{algorithm}

\begin{algorithm}
\algsetup{linenosize=\tiny}
%\scriptsize
\SetAlgoLined
\caption{MoveTo(\emph{destination}, \emph{currdirection}, \emph{phase}, \emph{round})}
\label{alg:msrfunc}
    $dir = 0$\;
    $prevpos =$ GetCurrentPos()\;
    \;
    \If{$currdirection ==$ {\upshape LEFT}}{
        $dir = 1$ \;
    }
    \If{$firstmeeting == 1$}{
        $direction = -1 * currdirection$\;
        $firstmeeting =$ 0\;
    }
    \;
    \While{{\upshape Reach($destination$)} $\neq$ \textbf{\upshape true}} {
         $meets =$ CheckState()\;

        \uIf{($state ==$ {\upshape $\mathcal{S}$} \textbf{\upshape or} $state ==$ {\upshape $\mathcal{B}$}) \textbf{\upshape and} $meets ==$ {\upshape  $\mathcal{N}$}}{
             Move($currdirection$) \;
        }
        \uElseIf{$state ==$ {\upshape $\mathcal{S}$} \textbf{\upshape and} $meets ==$ {\upshape $\mathcal{B}$}} {
            $directionsmet[dir] = 1$ \;
            $stick = 1$ \;
            \KwRet{\textbf{false}} \;
        }
        \uElseIf{$state ==$ {\upshape $\mathcal{S}$} \textbf{\upshape and} $meets ==$ {\upshape $\mathcal{S}$}}{
        	$state = \mathcal{B}$\;
            $directionsmet[dir] = 1$\;
            $firstmeeting = 1$\;
            \textbf{break}\;
        }
        \uElseIf{$state ==$ {\upshape $\mathcal{B}$} \textbf{\upshape and} ($meets ==$ {\upshape $\mathcal{I}$} \textbf{\upshape or} $meets ==$ {\upshape $\mathcal{S}$})}{
            StickToAndMove($currdirection$, $destination$)\;
        }
        \ElseIf{$state ==$ {\upshape $\mathcal{B}$} \textbf{\upshape and} $meets ==$ {\upshape $\mathcal{B}$}}
         {
            \uIf{$directionsmet[dir] \neq 1$}{
                 $directionsmet[dir] = 1$\;
                 $state = \mathcal{I}$\;
            	 \KwRet{\textbf{false}}\;
            }
            \uElseIf{$directionsmet[dir] == 1$ \textbf{\upshape and} {\upshape Total()} != n}{
               \textbf{break}\;
            }
            \ElseIf{$directionsmet[dir] == 1$ \textbf{\upshape and} {\upshape Total()} == n}{
                rendezvous = 1\;
                \KwRet{\textbf{false}} \;
            }
        }
    }
    \;
    $currpos =$ GetCurrentPos() \;
    $D_{round, phase} =$ DistanceTrav($prevpos$, $currpos$) \;
    $W_{round, phase} = T_{round, phase} - D_{round, phase}$. \;
    Wait($W_{round, phase}$) \;
     \;
    \KwRet{\textbf{\upshape true}} \;
\end{algorithm}

An example execution of Algorithm $\MSR$ when $d=50$,
$r=1.28$, and $n = 10$ is presented in Fig. \ref{fig:execution}. Each robot has unit speed. Fig. \ref{fig:execution} (Top) shows the positions of the robots  until rendezvous. The small circles on the plot depict the meeting time of the robots. The rendezvous occurs after 6 rounds. Fig. \ref{fig:execution} (Bottom) shows the robot meetings and the changes in their states during a round. Initially, all robots are in single (S) state. In phase-1 of round 2, $R_4$ and $R_5$ meet and change their states to boundary (B). This is followed, in the next phase, by the meetings of two pairs of single robots: $R_0$ and $R_1$, and $R_8$ and $R_9$, which also change their states to boundary. In round 3, single robots $R_4$, $R_6$, $R_8$ meet and stick to boundary robots $R_4$, $R_5$, and $R_7$, respectively. The ID of the robot that sticks to another robot is shown below this robot's ID. In round 4, single $R_2$ meets boundary $R_1$ and sticks to it. Next, we see that the boundary robots $R_1$ and $R_4$ meet in their deterministic direction and become internal robots. The boundary robots $R_5$ and $R_8$ also meet in their deterministic direction and become internal robots. In phase-2 of round 5, $R_0$ meets and collects the waiting internal robots $R_1$ and $R_4$, while $R_9$ meets and collects the waiting internal robots $R_8$ and $R_5$. Finally, the rendezvous occurs when robots $R_0$ and $R_9$ meet.

\begin{figure}[!ht]
\centering
\vspace{0.2cm}
\includegraphics[width=0.8\columnwidth]{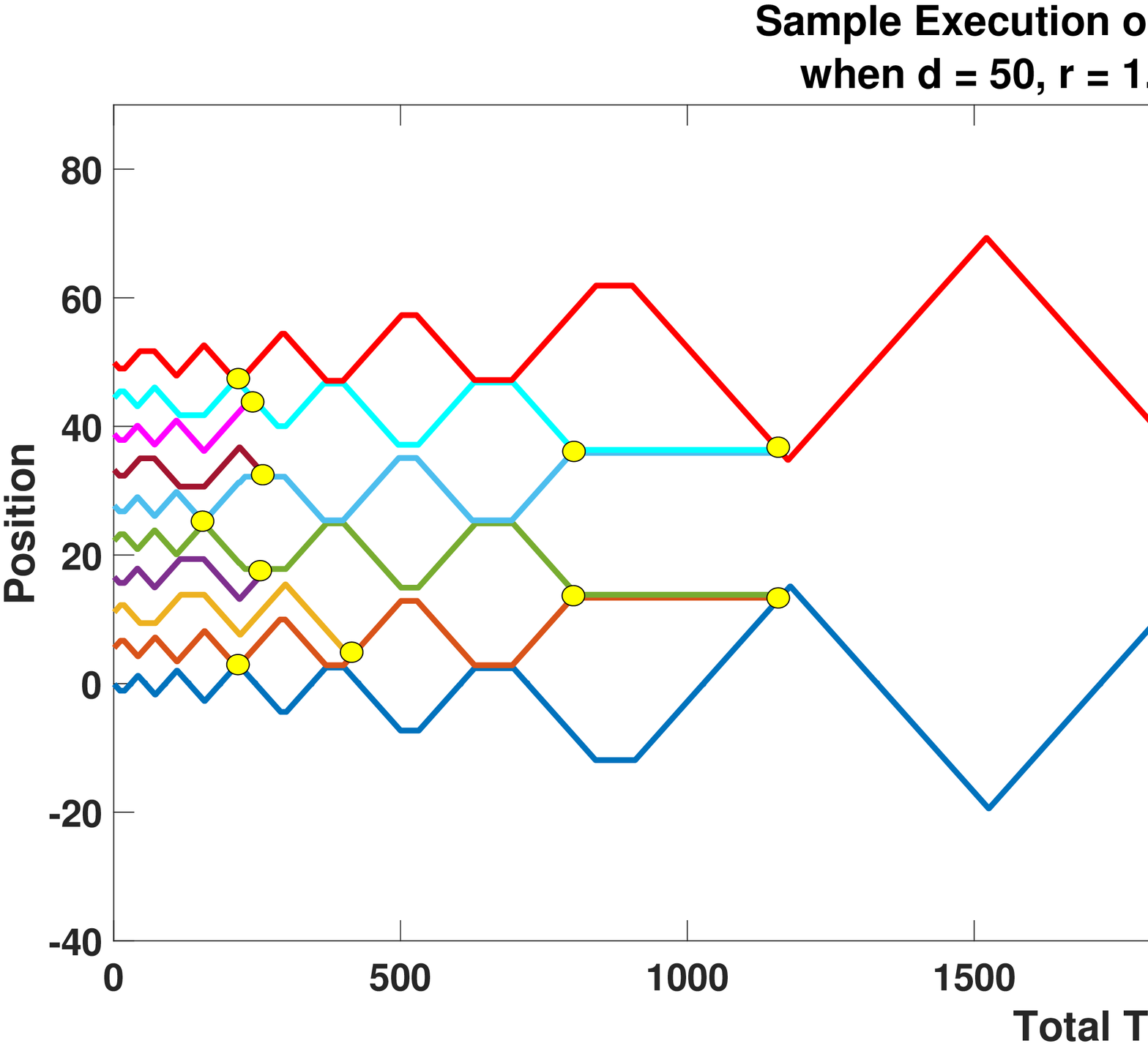}\\
\vspace{0.2cm}
\includegraphics[width=0.7\columnwidth]{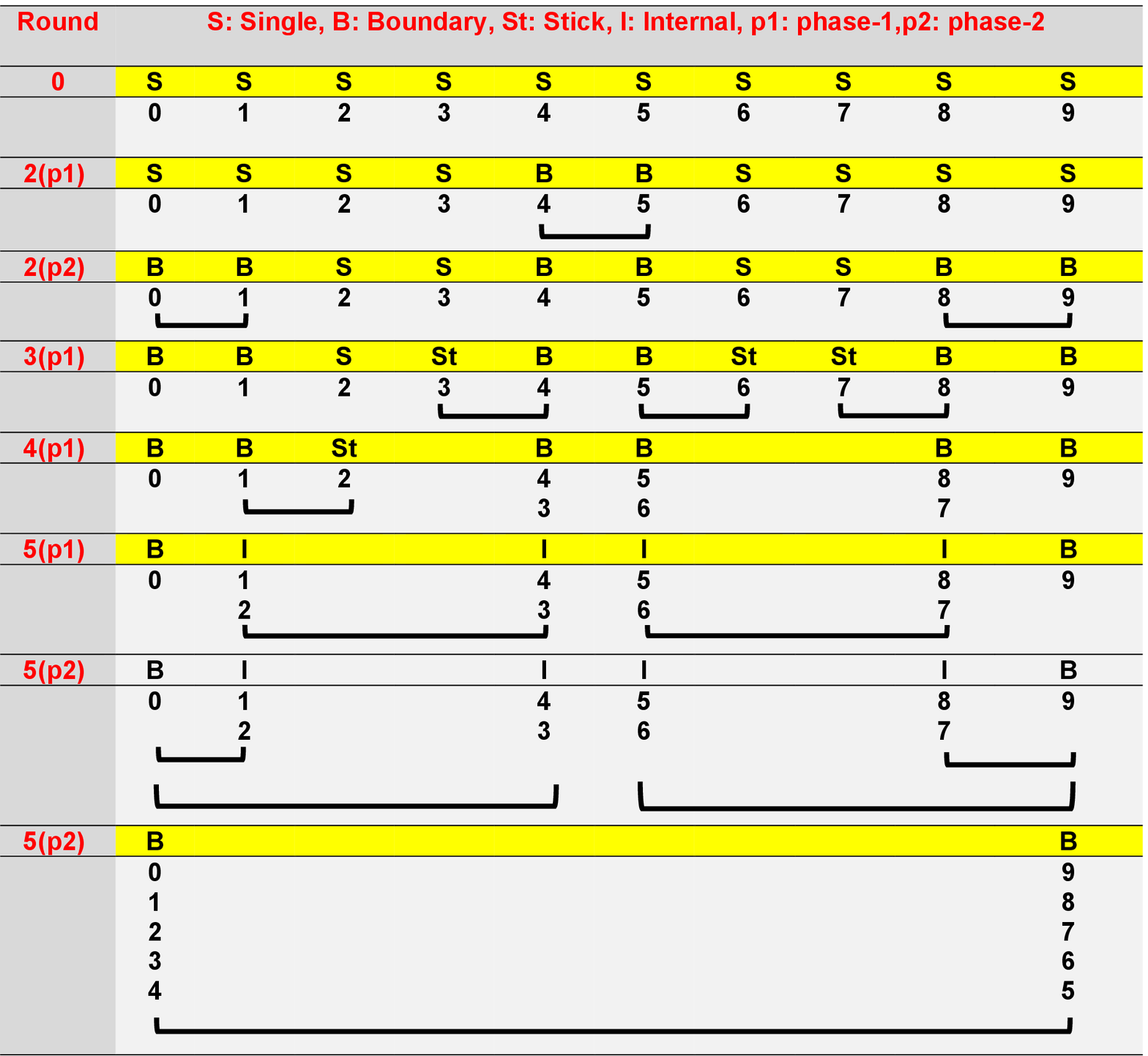}
\caption{Example Execution of Algorithm $\MSR$.}
\label{fig:execution}
\end{figure}
\section{Analysis of Algorithm $\MSR$}
\label{sec:msranalysis}

In this section, we find an upper bound on the expected distance traveled by the robot and compare it with the optimal offline solution. Note that since our strategy is symmetric, the expected distance traveled is identical for all robots. In the offline setting, the robots know their positions on the line and the initial distance between the boundary robots. Thus, the solution would be for them to move toward $x = d/2$. Our main result is presented in Theorem~\ref{theo:MSR}.

Let $\alpha$ be the first round that any robot can travel far enough to reach the meeting location of any pair of robot.
We divide the execution of the analysis into two stages. \emph{Stage-1} consists of rounds $0 \leq i \leq \alpha-1$. We ignore the possibility of  rendezvous occurring in this stage and bound the probability of rendezvous not occurring.  \emph{Stage-2} consists of rounds $i \geq \alpha$ in which the rendezvous can occur. In Lemma~\ref{lemma:rendezvous}, we show that if any pair of robots meet in Stage-2, then all robots meet. Rendezvous cannot occur in this stage as long as the coin flips of all robots come up the same, when all robots are still in the randomized search mode of $\MSR$. 
% We can see that the rendezvous behavior in this stage is consistent for all the robots. \PRT{not sure what consistent means?} 
Thus, we compute the expected distance traveled using an infinite sum.

Let $S_i$ denote the event that at least one adjacent single robot pair $p$ meets in round $i$. Here $p$ can be any adjacent robot pair. $S_i$ will be our proxy event for rendezvous. Let $M_{i}^{j}$ be the event that $R_{j}$ initially moves right and $\notM_{i}^{j}$ be the event that $R_{j}$ initially moves left in round $i$. The adjacent robots $R_j$ and $R_{j+1}$ can meet if event
$E_1 = M_{i}^{j} \wedge \notM_{i}^{j+1} \text{ or } E_2 = \notM_{i}^{j} \wedge M_{i}^{j+1}$
occurs. The probability that a single robot meets an adjacent single or \boun{} robot in round $i$ is $\Pee[E_1 \vee E_2] = 1/2$. On the other hand, $R_j$ and $R_{j+1}$ cannot meet when they initially move in the same direction; that is if event
$E_3 = M_{i}^{j} \wedge M_{i}^{j+1} \text{ or } E_4 = \notM_{i}^{j} \wedge \notM_{i}^{j+1}$
occurs. Thus, we have
\[ \Pee[S_{i}] =
  \begin{cases}
    0      & \quad \text{for } i \leq \alpha-1,\\
    \frac{2^{n}-2}{2^{n}} & \quad \text{for } i \geq \alpha.
  \end{cases}
\]

Next, we calculate the value of round $\alpha$ in terms of $k$ and $r$. To upper bound the expected distance traveled, we consider the furthest possible meeting location $x_m$ for $p$. Assuming that one of the robots in $p$ can be one of the actual boundary robots, either $x_m \in (d, d+f_{2i-1})$ or $x_m \in (-f_{2i-1}, 0)$. Let this boundary robot be $R_n$. Then, robot $R_j$ is able to reach $x_m$ if $x_j + f_{2i+1} \geq d+f_{2i-1}$. In the worst case, $R_j = R_1$, so $x_j = x_1 = 0$. Therefore, we have
%\begin{comment}
\begin{align}
f_{2i+1} \geq d + f_{2i-1},
\label{eq:phase1passesmax}
\end{align}
%\end{comment}
which holds for $\alpha \geq k/2+1-\log_{r^2}{r^{2}-1}$. Henceforth, $\alpha = k/2+i^{*}$, where $i^{*} = 1+\lceil\mid\log_{r^2}{r^{2}-1}\mid\rceil$.

Let $A_i$ denote the event that the algorithm is still active in round $i$.
We assume that Algorithm $\MSR$ is always active when $i < \alpha$, thus $\Pee[A_{i < \alpha}]=1$. In Lemma~\ref{lemma:rendezvous}, we show that rendezvous cannot occur in round $i \geq \alpha$ only if all robots' coin flips come up the same. Therefore, the probability that the algorithm is still active in round $i \geq \alpha$ is $\Pee[A_{i \geq \alpha}] = \Pee[\notS_0 \wedge \cdots \wedge \notS_{i-1}]$. Thus, we have
\begin{equation}
\label{eqn:probphase}
\Pee[A_i] = \left\{
\begin{array}{cc}
1 & 0 \leq i \leq \alpha-1, \\
\left(\frac{2}{2^{n}}\right)^{i-\alpha+1} & \alpha \leq i.  %(1/2)^(i-k) i>k   i+k+1   i+k
\end{array}
\right.
\end{equation}

%Due to space limitations, we provide a brief sketch of the rest of our analysis. The proofs of all our lemmas and theorem can be found in \cite{ozsoyeller2021multi}.

%\begin{comment}
We continue by analyzing the distance traveled in Stage-1 and Stage-2.
% \subsection{\textbf{Analysis of Stage-1}}
% \label{subsec:stg1}
We start with the computation of the expected distance traveled during Stage-1 which encompasses the rounds $i < \alpha$.
%\end{comment}
\begin{lemma}
\label{lemma:stage1} The expected distance traveled during Stage-1 satisfies
\begin{align}
&\sum_{i=0}^{\alpha-1} \Ee[D_i \mid A_i] \Pee[A_i] < r^{k+2i^{*}}\left(\frac{r+2+r^{-1}}{r^2-1}\right).
\label{eq:stage1}
\end{align}
\end{lemma}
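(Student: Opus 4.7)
The plan is to observe that Stage-1 contributes almost no probabilistic subtlety: by definition we set $\Pee[A_i]=1$ for all $i \le \alpha-1$, so the left-hand side collapses to $\sum_{i=0}^{\alpha-1} \Ee[D_i]$. The whole proof then reduces to bounding each round's length deterministically and summing a geometric series in $r^2$.

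First I would use the per-round length formula already established: $D_i$ is either $f_{2i+1}+2f_{2i}-f_{2i-1}$ or $f_{2i+1}+2f_{2i}+f_{2i-1}$, so in either case
\begin{equation*}
D_i \;\le\; f_{2i+1} + 2f_{2i} + f_{2i-1}.
\end{equation*}
Substituting $f_j = r^j$ (and noting this upper bound remains valid at $i=0$, where $f_{-1}=0$ means $D_0 = r+2 \le r+2+r^{-1}$), I would factor out $r^{2i}$ to obtain
\begin{equation*}
\Ee[D_i\mid A_i] \;\le\; r^{2i}\bigl(r + 2 + r^{-1}\bigr),
\end{equation*}
which holds uniformly for $0\le i\le \alpha-1$, independently of the coin-flip outcomes.

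Next I would sum, using $\Pee[A_i]=1$ in Stage-1:
\begin{equation*}
\sum_{i=0}^{\alpha-1}\Ee[D_i\mid A_i]\Pee[A_i] \;\le\; (r+2+r^{-1})\sum_{i=0}^{\alpha-1} r^{2i} \;=\; (r+2+r^{-1})\,\frac{r^{2\alpha}-1}{r^{2}-1}.
\end{equation*}
Since $\alpha = k/2 + i^{*}$, we have $r^{2\alpha} = r^{k+2i^{*}}$, and dropping the $-1$ in the numerator produces a strict inequality, giving the claimed bound
\begin{equation*}
\sum_{i=0}^{\alpha-1}\Ee[D_i\mid A_i]\Pee[A_i] \;<\; r^{k+2i^{*}}\left(\frac{r+2+r^{-1}}{r^{2}-1}\right).
\end{equation*}

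I do not anticipate a genuine obstacle here; the only minor point to be careful about is handling $i=0$, where $f_{-1}=0$ makes the two cases for $D_i$ collapse, but the upper-bound form $f_{2i+1}+2f_{2i}+f_{2i-1}$ remains correct (merely loose). Everything else is mechanical algebra on a geometric series together with the Stage-1 definition $\Pee[A_i]=1$ taken from equation~(\ref{eqn:probphase}).
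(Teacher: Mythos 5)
Your proof is correct and follows essentially the same route as the paper: set $\Pee[A_i]=1$ in Stage-1, bound each round by $D_i \le f_{2i+1}+2f_{2i}+f_{2i-1}=r^{2i}(r+2+r^{-1})$, and sum the geometric series using $r^{2\alpha}=r^{k+2i^{*}}$. The only difference is that you spell out the geometric-sum step and the $i=0$ edge case ($f_{-1}=0$), which the paper leaves implicit.
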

%\begin{comment}
\begin{proof}
Since we consider that rendezvous cannot occur in this stage, $\Pee[A_i] = 1$.
The distance traveled by a robot in round $i$ is $D_i \leq f_{2i+1} + 2f_{2i} + f_{2i-1}$. Therefore,
\begin{align*}
\sum_{i=0}^{\alpha-1} \Ee[D_i \mid A_i] \Pee[A_i] & < \sum_{i=0}^{\alpha-1} \left(f_{2i+1} + 2f_{2i} + f_{2i-1}\right)\cdot 1 \\
%&\qquad < \sum_{i=0}^{\alpha-1} (r^{2i+1} + 2r^{2i} + r^{2i-1})\\
%&\quad < \left(r+2+r^{-1}\right)\frac{r^{2(k/2+i^{*})} - 1}{r^2-1}\\
& < r^{k+2i^{*}}\left(\frac{r+2+r^{-1}}{r^2-1}\right) %\text{ for $r$ = 1.26.}
\end{align*}
\end{proof}
%\end{comment}

% \subsection{\textbf{Analysis of Stage-2}}
% \label{subsec:stg2}
%\begin{comment}
We now compute the expected distance traveled for all rounds $i \geq \alpha$. Unlike Stage-1, rendezvous occurs in this stage with nonzero probability.
In the next lemma, we show that the meeting of any pair of single robots always results in rendezvous in the next round.
%\end{comment}

%\begin{comment}
\begin{figure}
\centering
    \begin{subfigure}[c]{0.7\columnwidth}
        \includegraphics[width=\columnwidth]{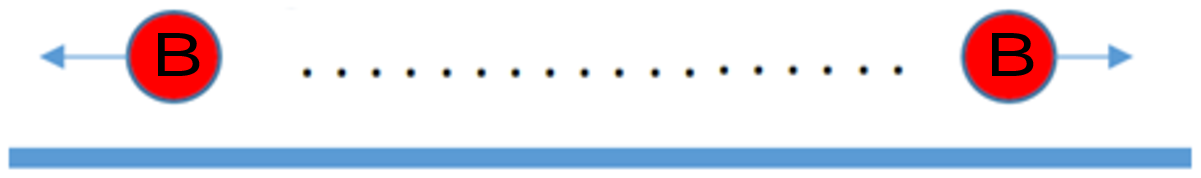} %[width=0.9\columnwidth]
        \caption{}
        \label{fig:lemmacase1}
    \end{subfigure}
    ~ %add desired spacing between images, e. g. ~, \quad, \qquad, \hfill etc.
      %(or a blank line to force the subfigure onto a new line)
    \begin{subfigure}[c]{0.7\columnwidth}
        \includegraphics[width=\columnwidth]{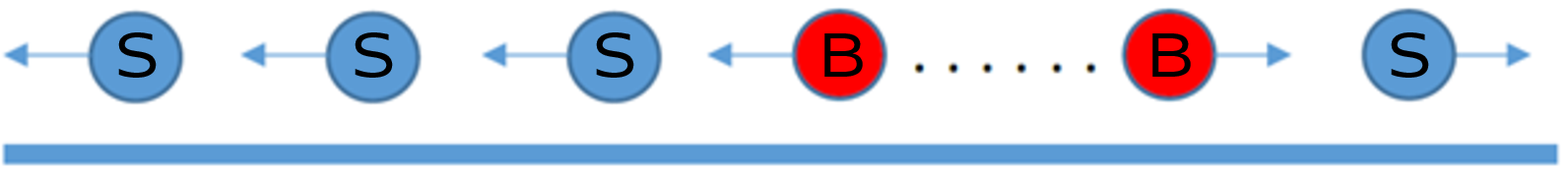} %[width=0.9\columnwidth]
        \caption{}
        \label{fig:lemmacase2}
    \end{subfigure}
\caption{Two possible configurations of the robots at the end of phase-1 of round $i+1$, given that event $S_i$ occurs in round $i$, that is at least one adjacent single robot pair $p$ meets in round $i$. B refers to the \boun{} robot and S refers to the single robot.}
\label{lemmacases}
\end{figure}
%\end{comment}

\begin{lemma}
Given that event $S_{i}$ occurs in round $i \geq \alpha$, then the robots always achieve rendezvous in round $i+1$.
\label{lemma:rendezvous}
\end{lemma}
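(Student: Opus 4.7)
The plan is to show that when event $S_i$ occurs at a round $i\geq\alpha$, the deterministic sweeps performed in round $i+1$ by the newly-created \boun{} robots are long enough to both collapse every intermediate \boun{} pair to internal state and then close all remaining robots in a single return sweep.

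First I would record the reachability consequence of $i\geq\alpha$. The defining inequality \eqref{eq:phase1passesmax} gives $f_{2i+1}\geq d+f_{2i-1}$, so since $f$ is strictly increasing one obtains $f_{2i+2}>d+f_{2i+1}$ and $f_{2i+3}>d+f_{2i+1}$. These inequalities say that a robot anywhere on $[0,d]$ starting round $i+1$ from a displacement of at most $f_{2i+1}$ can, in a single phase of round $i+1$, travel entirely past the opposite end of $[0,d]$. This is the only quantitative fact I will need.

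Next I would extract from the algorithm the direction inheritance rule at the first pair-meeting. If two single robots $R_j,R_{j+1}$ meet in phase-1 of round $i$, they were approaching each other, so their phase-1 coin flips disagreed; the assignment $\dird=\notdir$ then forces the left member to move left and the right member to move right at the start of round $i+1$. A phase-2 meeting is handled symmetrically via $\dird=\dir$, and again sends the two new \boun{} robots outward in phase-1 of round $i+1$. In particular, inside each new \boun{} pair, the left member moves left and the right member moves right in phase-1 of round $i+1$.

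Then I would do the case analysis suggested by Fig.~\ref{lemmacases}. In case (a), exactly one pair becomes \boun{} in round $i$, so in phase-1 of round $i+1$ the left \boun{} sweeps past $x=0$ and the right \boun{} sweeps past $x=d$; any still-single robot randomly moving toward an oncoming \boun{} sticks to it, while any that initially moves away is still within a $f_{2i+2}$-ball of its reference and is overtaken in phase-2. Phase-2 then reverses both \boun{} robots; since each travels $f_{2i+3}>d+f_{2i+1}$ inward, they meet at the midpoint of the original pair's positions, sweeping up every other robot en route. In case (b), several pairs become \boun{}; taking them in left-to-right order and applying the direction rule shows that the right member of pair $p$ and the left member of pair $p+1$ move toward each other in phase-1 with common deterministic direction, meet and convert to internal (then wait). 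Iterating this reduces case (b) to case (a) for the outermost two \boun{} robots, with the interior internal robots waiting to be collected.

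The main obstacle is the bookkeeping in case (b): I must verify that every interior \boun{}-\boun{} encounter occurs strictly inside phase-1 of round $i+1$ (so no \boun{} robot wastes its phase-2 on a spurious sweep), that still-single robots wedged between two \boun{} pairs do not escape past an outermost \boun{} before being collected, and that the algorithm's waiting-time accounting keeps phase boundaries synchronized across states. Each of these checks reduces to the single reachability inequality $f_{2i+2}>d+f_{2i+1}$ together with the motion-pattern bounds on $D_{i,1}$ and $D_{i,2}$ established just before the lemma, so no new quantitative estimate beyond the defining property of $\alpha$ is required.
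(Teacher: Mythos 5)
Your argument follows essentially the same route as the paper's proof: the reachability inequality (\ref{eq:phase1passesmax}) built into the definition of $\alpha$, the observation that a newly formed \boun{} pair inherits outward deterministic directions, and a case analysis in which the outermost \boun{} robots sweep outward in phase-1 of round $i+1$ and close inward in phase-2, collecting internal and stuck robots, while stragglers reach the waiting \boun{} pair thanks to the same inequality. Your case split (one pair versus several pairs formed in round $i$) is organized differently from the paper's, which classifies the two possible configurations at the end of phase-1 of round $i+1$ (Fig.~\ref{lemmacases}); you make the interior \boun{}--\boun{} annihilation explicit where the paper only asserts that the robots between the extreme \boun{} robots ``would be already in internal state,'' which is a small gain in clarity. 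One timing slip: if two adjacent pairs both form in phase-1 of round $i$, their facing members already meet in phase-2 of round $i$ (their phase-2 targets appear in swapped order, and they cannot cross), not in phase-1 of round $i+1$; this is harmless for your reduction to the one-pair case.

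The substantive hole is in your dichotomy that a still-single robot in round $i+1$ either sticks to an oncoming \boun{} or ``moves away and is overtaken in phase-2.'' A third possibility is that two still-single robots meet \emph{each other} during round $i+1$: they then become a fresh \boun{} pair, wait, and separate in phase-2 of that same round (their deterministic directions only govern rounds from $i+2$ on). A member of such a pair can run outward in phase-2 of round $i+1$, or its waiting partner can stop a closing outer \boun{} robot via a \boun{}--\boun{} encounter, so that not all $n$ robots co-locate within round $i+1$ (e.g.\ with $n=4$: pair $(R_1,R_2)$ forms in round $i$, $R_3,R_4$ stay single, and in round $i+1$ they flip opposite coins; $R_4$ then moves outward in phase-2 of round $i+1$ and is only collected in round $i+2$). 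You list the ``wedged singles'' bookkeeping as reducible to the reachability inequality, but it is not: the difficulty is a state-transition issue (single meets single rather than single meets \boun{}), not a distance-budget issue. To be fair, the paper's own proof has exactly the same blind spot---its configurations (a) and (b) tacitly assume that no new \boun{} pair is created during round $i+1$---so your proposal matches the published argument both in approach and in its level of rigor, and inherits its weakness rather than introducing a new one.
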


%\begin{comment}
\begin{proof}
Consider that all robots are single at the beginning of round $i$. (\ref{eq:phase1passesmax}) ensures that any adjacent single robot pair can meet if event $E_1 \vee E_2$ occurs between them in round $i \geq \alpha$ and thus can become a \boun{} robot pair. The actual boundary robots can never be internal. Moreover, each sequence of consecutive internal robots is always located between two \boun{} robots which move in opposite deterministic directions. Therefore, it is guaranteed that once there is a \boun{} robot pair in round $i$, then there will always be at least one active \boun{} robot pair until rendezvous.

First, we show that the rendezvous may not occur in round $i$ if this is the first round that there is a \boun{} robot pair $p$. In $\MSR$, if pair $p$ is formed in phase-1, then both robots in $p$ wait till the end of this phase, and move away from each other in phase-2. Therefore, rendezvous cannot occur in this round.

Following an unsuccessful round $i$, the robots will start round $i+1$. Now, consider that we have at least one \boun{} robot pair $p$ at the beginning of round $i+1$. At the end of phase-1 of round $i+1$, the robots can have two possible configurations. In the first configuration, which is shown in Fig.~\ref{lemmacases}(a), the rest of the robots are between the leftmost \boun{} robot $R_j$ and the rightmost \boun{} robot $R_{j'}$. This can occur in two ways. The first way is when $R_j$ and $R_{j'}$ are the actual boundary robots. The second way is when $R_j$ and $R_{j'}$ are not the actual boundary robots but have already met any single robot in their deterministic direction. That is, any robot that started to the left of $R_j$ and right of $R_{j'}$ would have already met $R_j$ and $R_{j'}$.
Recall that a single robot sticks to the \boun{} robot that it meets. In phase-2 of the first configuration, $R_j$ and $R_{j'}$ move towards each other while collecting any robot between them, which would be already in internal state, thus the rendezvous occurs. 

In the second configuration shown in Fig.~\ref{lemmacases}(b), there can only be single robots in the deterministic directions of $R_j$ and $R_{j'}$. In this case, the single robots on the left side of $R_j$ move in the same direction as $R_j$, while the single robots on the right side of $R_{j'}$ move in the same direction as $R_{j'}$. Thus, the rendezvous cannot occur in phase-1. However, in phase-2, $R_j$ and $R_{j'}$ move towards each other while collecting any robots between them and start waiting before starting the next round. Meanwhile, the single robots move toward the waiting \boun{} robots $R_j$ and $R_{j'}$ and meet them, thus the rendezvous occurs. (\ref{eq:phase1passesmax}) also ensures that while the \boun{} robots are waiting for the end of phase-2 at their meeting location, a single robot moving towards this location would be able to meet them.
\end{proof}
%\end{comment}

\begin{lemma}
\label{lemma:stage2} The expected distance traveled during Stage-2 satisfies
\begin{align}
&\sum_{i=\alpha}^{\infty} \bigg[(\Ee[ D_i \mid \notS_i]\Pee[\notS_i]) + (\Ee [ D_i \mid S_i ]\Pee[S_i])\bigg]\Pee[A_i]\notag\\
&\qquad <  r^{k+2i^{*}}\left(\frac{0.75r^3+1.5r^2+1.75r+r^{-1}+2}{4-r^2}\right).
\label{eq:stage2}
\end{align}
\end{lemma}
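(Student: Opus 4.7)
The plan is to split the sum into the two cases $S_i$ and $\notS_i$, bound $\Ee[D_i]$ in each case, combine them into a single per-round bracket, and evaluate the resulting geometric series. The key observation is that $\Pee[A_i]$ decays geometrically in $i$, which together with the $r^{2i}$ growth of the itinerary lengths produces a convergent series whose closed form is easy to read off.

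For the first case, if $\notS_i$ occurs then every robot is still single and traverses its full round-$i$ itinerary, so $\Ee[D_i\mid\notS_i]\leq f_{2i+1}+2f_{2i}+f_{2i-1}$. For the second case, Lemma~\ref{lemma:rendezvous} guarantees that if $S_i$ occurs in round $i\geq\alpha$ then rendezvous completes in round $i+1$. Because the factor $\Pee[A_{i+1}]$ in (\ref{eqn:probphase}) excludes the event $S_i$, the round-$(i{+}1)$ travel is not picked up by any other term of the outer sum, so I charge its worst-case cost to round $i$, giving $\Ee[D_i\mid S_i]\leq (f_{2i+1}+2f_{2i}+f_{2i-1})+(f_{2i+3}+2f_{2i+2}+f_{2i+1})$. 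Writing $R_i := r^{2i-1}(r+1)^2$ (using $f_j=r^j$), the bracket simplifies via $\Pee[\notS_i]+\Pee[S_i]=1$ to $R_i + R_{i+1}\Pee[S_i]$.

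Substituting $\Pee[S_i]=(2^n-2)/2^n$ and $\Pee[A_i]=(2/2^n)^{i-\alpha+1}$ and reindexing $j=i-\alpha$ turns the resulting sum into a geometric series with common ratio $2r^2/2^n$. A short derivative check in $x=2^n$ (the decisive polynomial $x^{2}(1+r^{2})-4r^{2}x+4r^{4}$ has negative discriminant $-16r^{6}$ and so is positive) shows the closed form of the sum is strictly decreasing in $x$ whenever $2^n>2r^2$, so among $n\geq 3$ the worst case is $n=3$. At $n=3$, $\Pee[S_i]=3/4$ and the common ratio is $r^2/4$, so the geometric factor evaluates to $\tfrac{1}{4-r^{2}}$. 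Combining with $r^{2\alpha}=r^{k+2i^{*}}$ and the polynomial identity $(r+1)^{2}(1+\tfrac{3}{4}r^{2})/r = 0.75r^{3}+1.5r^{2}+1.75r+2+r^{-1}$ yields the stated bound.

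The main obstacle is the bookkeeping in the $S_i$ case: recognising that the rendezvous-completing round $i+1$ is \emph{not} already counted by any other term of the outer sum, and therefore its worst-case itinerary must be absorbed into the round-$i$ contribution. A secondary, but technically straightforward, step is the monotonicity argument in $n$ which reduces the multi-robot analysis to the instance $n=3$.
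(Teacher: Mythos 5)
Your proposal is correct and follows essentially the same route as the paper: the same split on $S_i$ versus $\notS_i$, the same use of Lemma~\ref{lemma:rendezvous} to charge the round-$(i+1)$ itinerary to a successful round $i$, the same probabilities from (\ref{eqn:probphase}), and the same geometric-series evaluation reducing to the worst case $n=3$ with ratio $r^2/4$ and factor $1/(4-r^2)$. Your only additions are cosmetic: you merge the two conditional terms into one bracket before summing and make explicit the monotonicity-in-$2^n$ argument that the paper leaves implicit in its ``for $n>2$'' claim.
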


%\begin{comment}
\begin{proof}
If $S_i$ does not occur in round $i$, then the expected distance traveled in round $i$ is
\begin{align}
&\Ee[D_{i} \mid \notS_{i}] \leq f_{2i+1} + 2f_{2i} + f_{2i-1}.
\label{eq:stage2Dunsuccess}
\end{align}
By Lemma~\ref{lemma:rendezvous}, event $S_i$ in round $i$ results in rendezvous in round $i+1$. Therefore, the distance traveled when $S_i$ occurs is equal to the distance traveled in rounds $i$ and $i+1$. Hence, we have
\begin{align}
\Ee[D_{i} \mid S_{i}] &< \left(f_{2i+1} + 2f_{2i} + f_{2i-1}\right) + \notag\\
&\quad\left(f_{2(i+1)+1} + 2f_{2(i+1)} + f_{2(i+1)-1}\right)\notag\\
& < r^{2i}\left(r^3+2r^2+2r+r^{-1}+2\right).
\label{eq:stage2Dsuccess}
\end{align}

The expected distance traveled during Stage-2 is the sum of the expected distance traveled in unsuccessful
rounds and the expected distance traveled in the (final) successful round, which is given by
\begin{align}
\sum_{i=\alpha}^{\infty} \bigg[(\Ee[ D_i \mid \notS_i]\Pee[\notS_i]) + (\Ee [ D_i \mid S_i ]\Pee[S_i])\bigg]\Pee[A_i].
\label{eq:s2uandsbound}
\end{align}
First, we compute the expected distance traveled in successful rounds during Stage-2 using (\ref{eqn:probphase}) and (\ref{eq:stage2Dsuccess}), which is bounded by
\begin{align}
& \sum_{i=\alpha}^{\infty}\left(\Ee [ D_i \mid S_i ]\Pee[S_i]\right)\Pee[A_i]\notag\\
%& \quad < \sum_{i=\alpha}^{\infty} \Ee [ D_i \mid S_i ] \left(\frac{2^n-2}{2^n}\right)\left(\frac{2}{2^{n}}\right)^{i-\alpha+1} \notag\\
%& \quad < r^{k+2i^{*}}\left(r^3+2r^2+2r+r^{-1}+2\right)\left(\frac{2^n-2}{2^{n}}\right)\left(\frac{2}{2^{n}}\right)\sum_{i=0}^{\infty}r^{2i}\left(\frac{2}{2^{n}}\right)^{i}\notag\\
%& \quad < r^{k+2i^{*}}\left(r^3+2r^2+2r+r^{-1}+2\right)\frac{2^n-2}{2^{n-1}\left(2^n-2r^2\right)}\notag\\
& \quad < 0.75r^{k+2i^{*}}\left(\frac{r^3+2r^2+2r+r^{-1}+2}{4-r^2}\right) \text{ for $n > 2$.}
\label{eq:s2sbound}
\end{align}
Next, we compute the expected total distance traveled during unsuccessful Stage-2 rounds using (\ref{eqn:probphase}) and (\ref{eq:stage2Dunsuccess}), which is bounded by
\begin{align}
& \sum_{i=\alpha}^{\infty} \left(\Ee[ D_i \mid \notS_i]\Pee[\notS_i]\right)\Pee[A_i]\notag\\
%& \quad < \sum_{i=\alpha}^{\infty} r^{2i}\left(r + 2 + r^{-1}\right) \left(\frac{2}{2^n}\right)\left(\frac{2}{2^{n}}\right)^{i-\alpha+1} \notag\\
%& \quad < r^{k+2i^{*}}\left(r + 2 + r^{-1}\right)\left(\frac{4}{2^{2n}}\right)\sum_{i=0}^{\infty}r^{2i}\left(\frac{2}{2^{n}}\right)^{i}\notag\\
%& \quad < r^{k+2i^{*}}\left(r + 2 + r^{-1}\right)\left(\frac{2}{2^{n-1}\left(2^n-2r^2\right)}\right)\notag\\
& \quad < 0.25r^{k+2i^{*}}\left(\frac{r + 2 + r^{-1}}{4-r^2}\right) \text{ for $n > 2$.}
\label{eq:s2ubound}
\end{align}
Therefore, we conclude that (\ref{eq:s2uandsbound}) is bounded by the sum of (\ref{eq:s2sbound}) and (\ref{eq:s2ubound}).
\end{proof}
%\end{comment}

\begin{theorem}
For the choice of $r = 1.28$, $\MSR$ algorithm has the competitive ratio of 54.732 and the competitive ratio of 34.154 as $n\to\infty$.
\label{theo:MSR}
\end{theorem}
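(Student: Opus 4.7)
The plan is to combine Lemmas~\ref{lemma:stage1} and \ref{lemma:stage2} into an overall bound on the expected distance $\Ee[D]$ traveled by a single robot, divide by the worst-case offline optimum $d/2$, and substitute $r=1.28$.

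Step 1: summing the two stage bounds gives, for $n>2$ (the worst case being $n=3$, which produces the coefficients $0.75$ and $0.25$ appearing in Lemma~\ref{lemma:stage2}),
\[
\Ee[D] < r^{k+2i^*}\left[\frac{r+2+r^{-1}}{r^2-1} + \frac{0.75\,(r^3+2r^2+2r+r^{-1}+2) + 0.25\,(r+2+r^{-1})}{4-r^2}\right].
\]
In the offline setting, every robot walks directly to $x=d/2$, so no robot travels more than $d/2$. Since $d = r^{k+\delta}$ with $\delta\in(0,1]$, the worst case for the ratio is $\delta\to 0^+$, giving $\mathrm{OPT}\ge r^k/2$. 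Hence
\[
\mathrm{CR} \leq \frac{\Ee[D]}{r^k/2} = 2\,r^{2i^*}\cdot[\,\text{bracket}\,].
\]

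Step 2: at $r=1.28$ we have $r^2=1.6384$ and $|\log_{r^2}(r^2-1)|\approx 0.909$, so $i^* = 1 + \lceil 0.909\rceil = 2$ and $r^{2i^*} = r^4 \approx 2.684$. Direct numerical evaluation of the bracket multiplied by $2r^4$ yields $\mathrm{CR}\approx 54.732$, establishing the first claim.

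Step 3 (asymptotic case): I would re-run the Stage-2 geometric sum of Lemma~\ref{lemma:stage2} keeping $p := 2^{1-n}$ symbolic. The successful and unsuccessful Stage-2 coefficients are then $(1-p)\,p/(1-r^2 p)$ and $p^2/(1-r^2 p)$, each carrying at least one factor of $p\to 0$. Hence the entire Stage-2 contribution vanishes as $n\to\infty$, only the Stage-1 bound survives, and at $r=1.28$ one gets
\[
\lim_{n\to\infty}\mathrm{CR} \leq 2\,r^{2i^*}\cdot\frac{r+2+r^{-1}}{r^2-1} \approx 34.154.
\]

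The main obstacle is justifying $r=1.28$ as the minimizer. The bracket is a smooth function of $r$ on the feasible interval $(1,2)$ (lower endpoint from $r^2-1>0$ in Stage-1, upper endpoint from $4-r^2>0$ needed for convergence of the Stage-2 geometric series), whereas $r^{2i^*}$ is piecewise constant in $r$ due to the ceiling in $i^*$. So the optimal $r$ must be located numerically; once it is fixed, the two competitive-ratio claims reduce to direct arithmetic.
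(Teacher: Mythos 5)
Your proposal follows essentially the same route as the paper's proof: add the Stage-1 and Stage-2 bounds of Lemmas~\ref{lemma:stage1} and~\ref{lemma:stage2} (worst case $n=3$, $\delta\to 0$), divide by the offline cost $d/2$, and evaluate at $r=1.28$, where $i^*=2$ and $2r^{2i^*}$ times the bracketed sum indeed gives $\approx 54.732$, while your re-derivation of the Stage-2 geometric sums with $p=2^{1-n}$ showing that contribution vanishes as $n\to\infty$ matches the paper's argument for $34.154$. Your remaining caveat about locating the optimal $r$ numerically is not a gap for the theorem as stated (which fixes $r=1.28$), and the paper likewise treats the optimization of $r$ purely numerically.
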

%\begin{comment}
\begin{proof}
The expected distance traveled is obtained by adding the expressions in equations (\ref{eq:stage1}) and (\ref{eq:stage2}). Since the initial distance between the leftmost and rightmost robot on the line is $d = r^{k+\delta}$, we first replace each occurrence of $r^{k}$ with $dr^{-\delta}$ and $n$ with 3. Then, we divide it by $d/2$, which is the cost of the optimal offline solution. This expression is maximized at $\delta = 0$, and the coefficient of $d$ in it is minimized when $r = 1.28$. As a result, we guarantee the competitive ratio of 54.732. As $n\to\infty$, the distance traveled in Stage-2 approaches to 0; thus the competitive ratio of our algorithm approaches to 34.154.
\end{proof} 
%\end{comment}
\section{Simulations}
\label{sec:simulations}

In this section, we study the performance of our algorithm in simulations and verify the bound obtained in Theorem \ref{theo:MSR}. 
We implemented a multithreaded system using Java, where each robot is represented as a separate thread that executes the Algorithm $\MSR$. Varying the parameters $d$, $n$, and $r$, we report the results of the following: average distance competitive ratio, average number of rounds, average total time, average time competitive ratio, and average distance traveled. Each result is averaged over 100 trials, where each trial uses the maximum of the respective data among $n$ robots.

Fig. \ref{fig:simulationsgrp1}(Left) and Fig. \ref{fig:simulationsgrp1}(Right) show the average distance competitive ratio as a function of $n$ an $d$ and as a function of $n$ and $r$, respectively. This ratio is obtained by dividing the maximum distance traveled by $d/2$, where $d$ is the respective initial distance value tested in the simulations. In the left plot, $n$ is varied between 4 and 64 while $d$ is varied between $50$ and $125$. The robots are distributed uniformly random on the line. We observe that the average distance competitive ratio is higher when $n$ is small and stays constant as $n$ and $d$ increases. We further see that this ratio is smaller than our theoretically proved upper bound. Recall that the optimal $r$ value obtained in Theorem \ref{theo:MSR} is 1.28. In the right plot, this value is shown with double stars. With respect to the change in $n$ and when $d$ is fixed, we can see that $\MSR$ performs the best for $r = 1.28$.

Next, in Fig. \ref{fig:simulationsgrp2}, we investigate the average number of rounds to achieve rendezvous with respect to the change in $n$ and $d$ (Left plot), also with respect to the change in $n$ and $r$ (Right plot). The average of total number of rounds stays constant as $n$ increases and decreases as $r$ increases. Since the number of rounds is proportional to the distance the traveled by the robots, in the left plot, we see that the average number of rounds increases as $d$ increases, but stays constant with respect to the change in $n$.

The total time required for rendezvous is equal to the sum of the total distance traveled and total waiting time. Figure \ref{fig:simulationsgrp3} shows the results of the simulations for the average total time with respect to the change in $n$. Left plot reports the results for various $d$ when $r = 1.28$, whereas the right plot reports the results for various $r$ values when $d = 75$. As expected, the total time increases as $d$ increases and stays constant as $n$ increases. In the right plot, we observe that $r = 1.28$ outperforms the other $r$ values as $n$ increases. Figure \ref{fig:simulationsgrp4}(Left) shows the time competitive ratio with respect to the change in $n$ and $d$. Comparing the distance and time ratios in Figures \ref{fig:simulationsgrp1} and \ref{fig:simulationsgrp4}, we see that the distance competitive ratio is less than time competitive ratio. 

Fig. \ref{fig:simulationsgrp5} provides the average distance traveled for various $n$ values with respect to the change in $d$. As in the average total time results, we see that the average distance traveled increases as $d$ increases and stays constant as $n$ increases for $r = 1.28$. In the right plot, we compare the effect of equidistant and uniformly random initial placements of the robots on the performance of our algorithm. For various $n$ values and $d = 75$, the average distance traveled is smaller when the robots are randomly located on the line compared to the other case. The performances of $\MSR$ in these two cases approach each other as $n$ increases.

\begin{figure}[!ht]
\centering
\vspace{0.1cm}
\includegraphics[width=0.49\columnwidth]{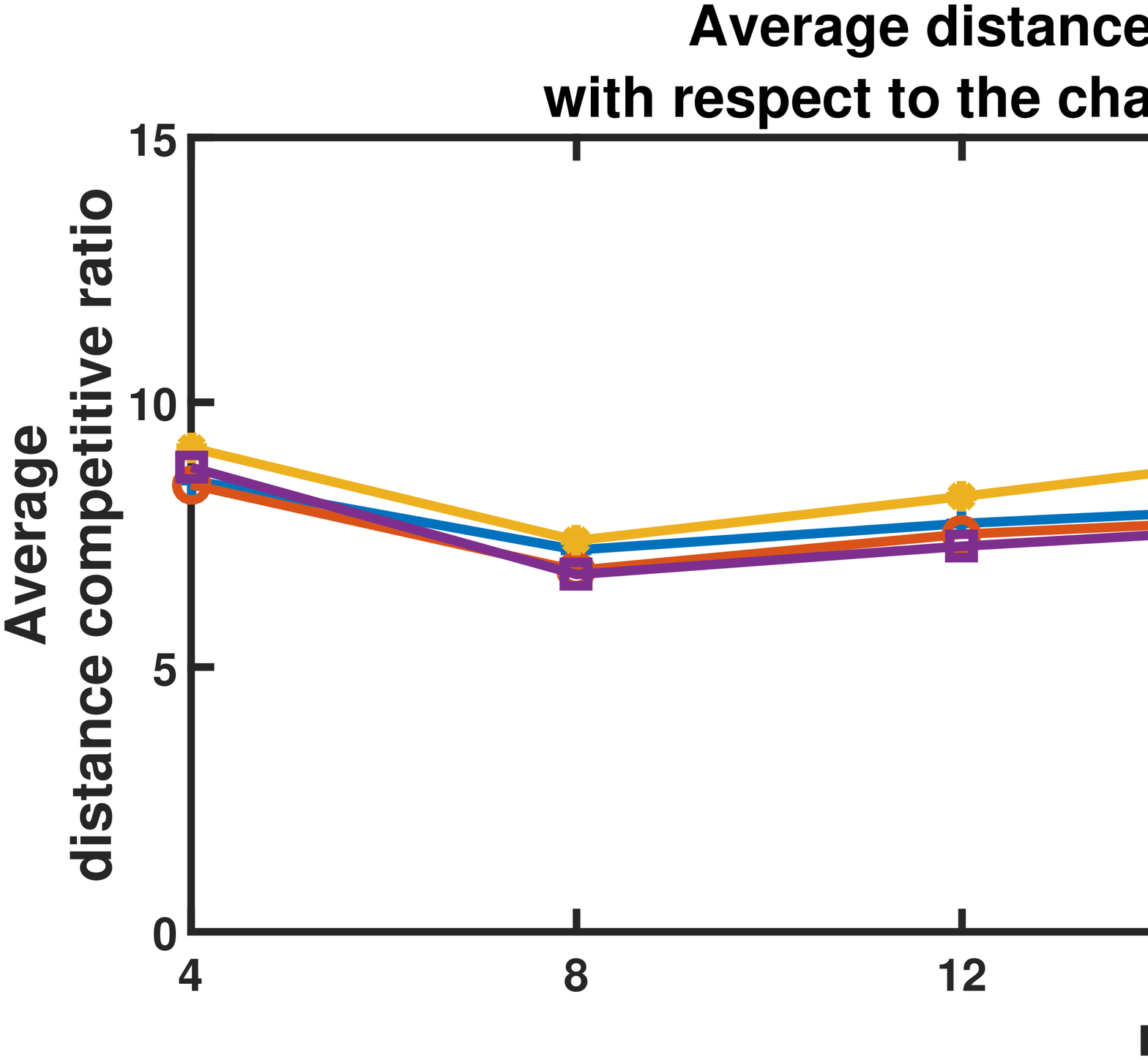}
\includegraphics[width=0.49\columnwidth]{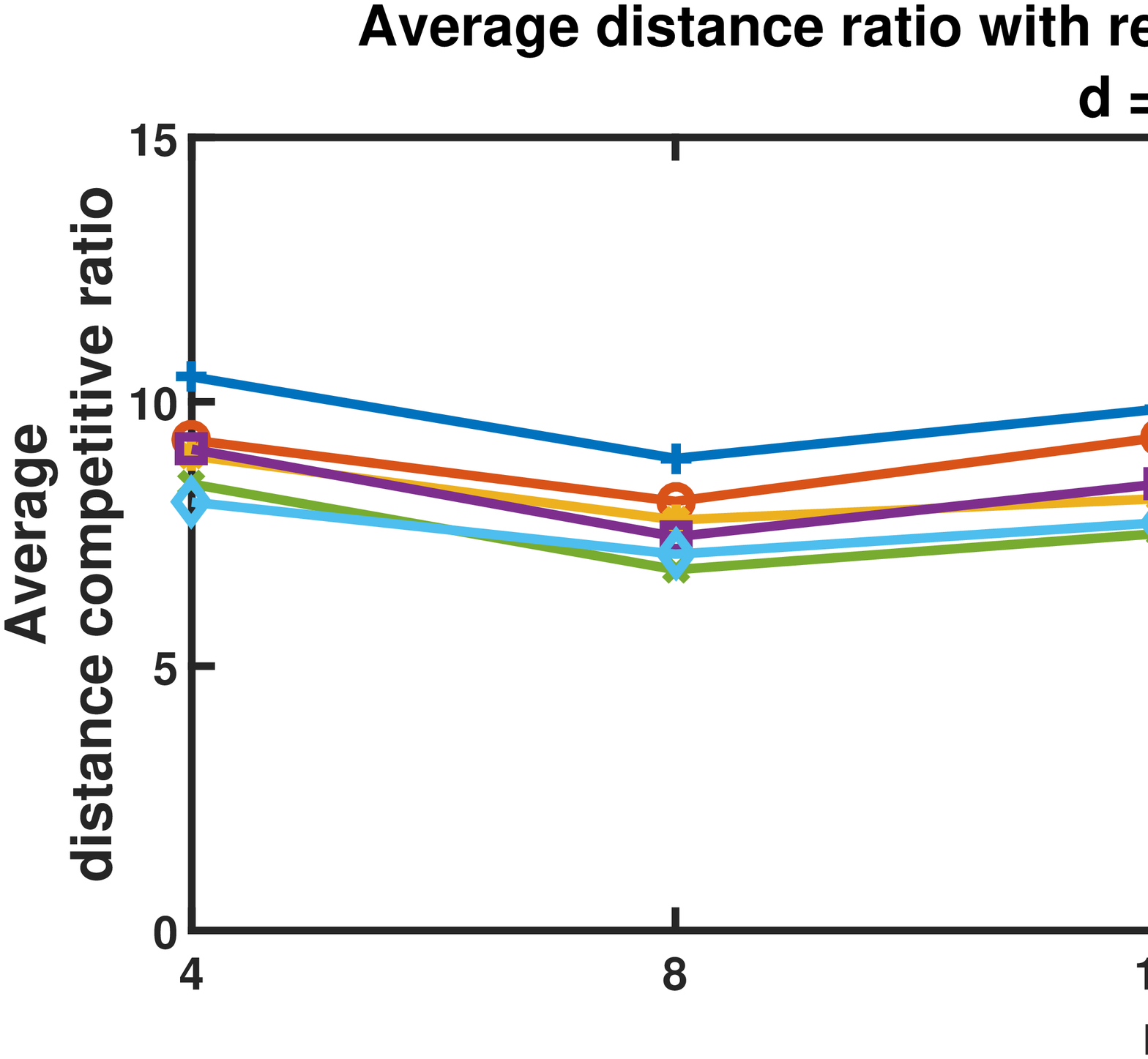}
\caption{Average distance competitive ratio.}
\label{fig:simulationsgrp1}
\end{figure}

\begin{figure}[!ht]
\centering
\includegraphics[width=0.49\columnwidth]{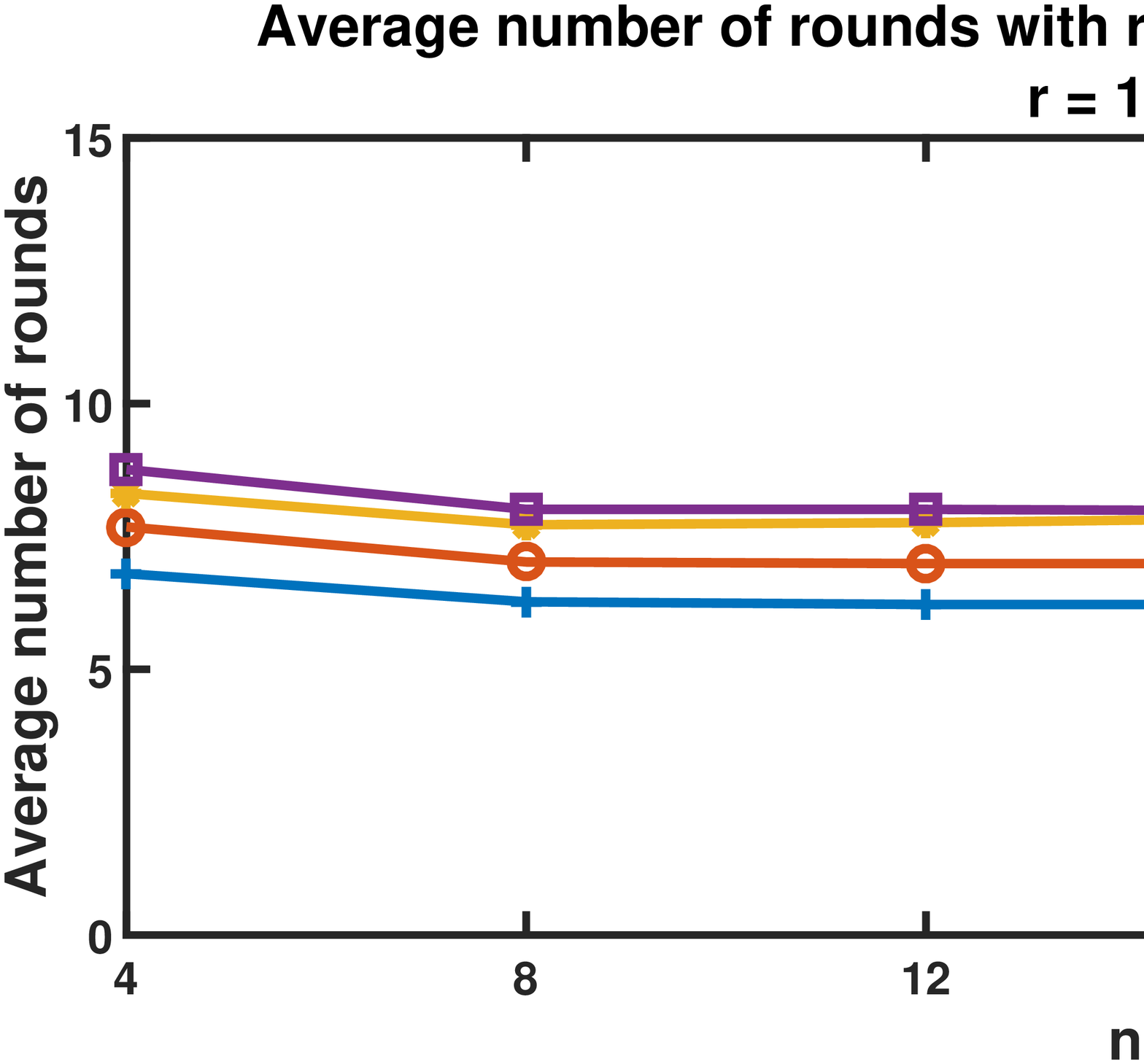}
\includegraphics[width=0.49\columnwidth]{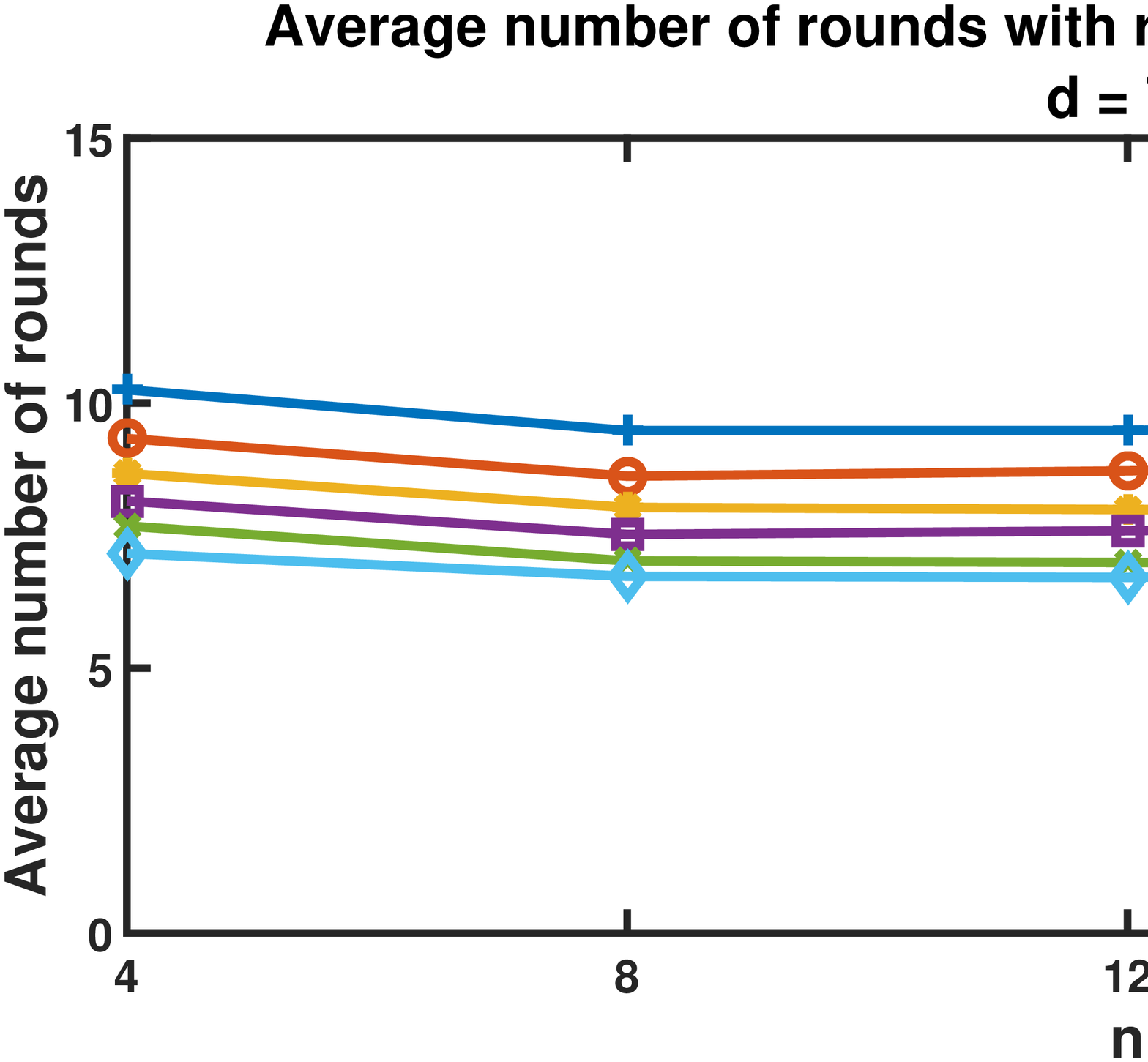}
\caption{Average of total number of rounds.}
\label{fig:simulationsgrp2}
\end{figure}

\begin{figure}[!ht]
\centering
\includegraphics[width=0.49\columnwidth]{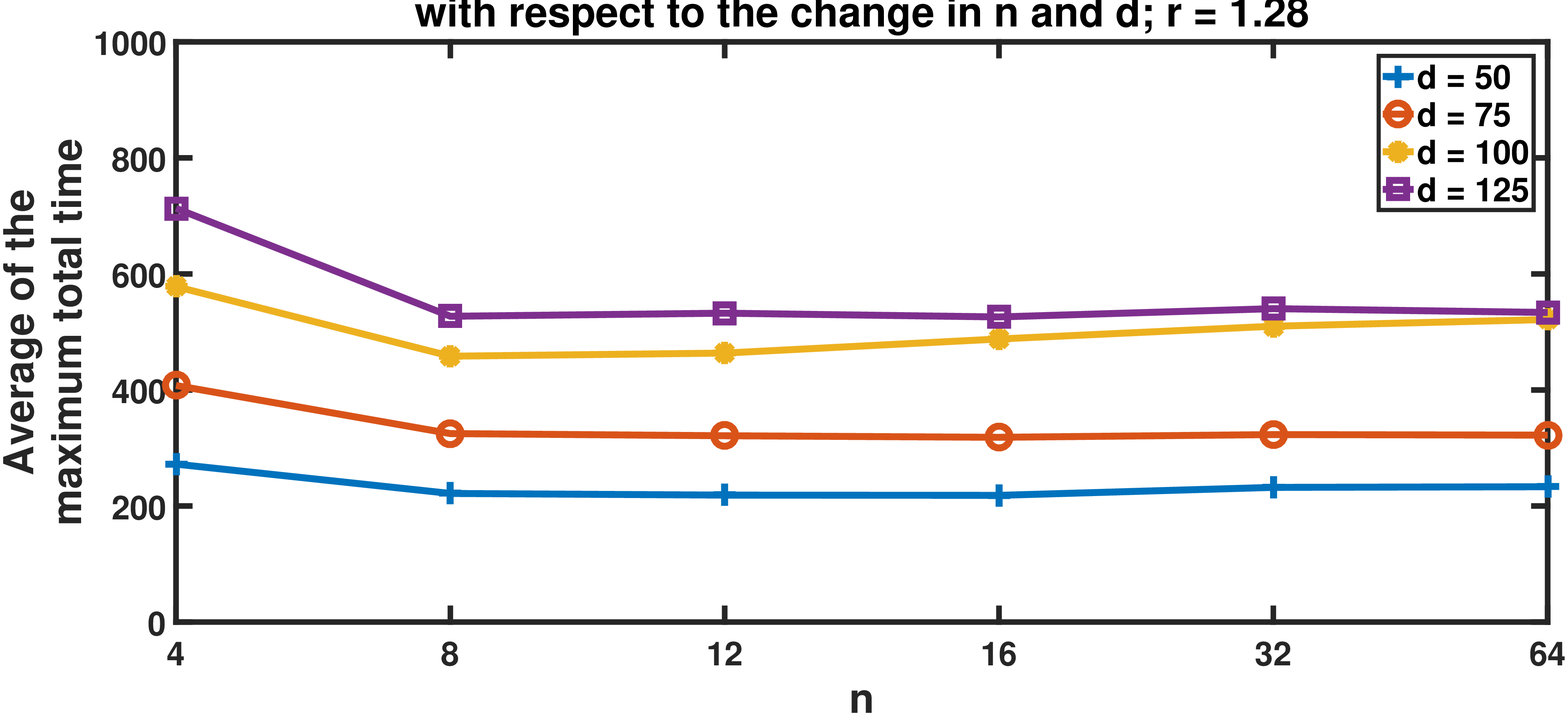}
\includegraphics[width=0.49\columnwidth]{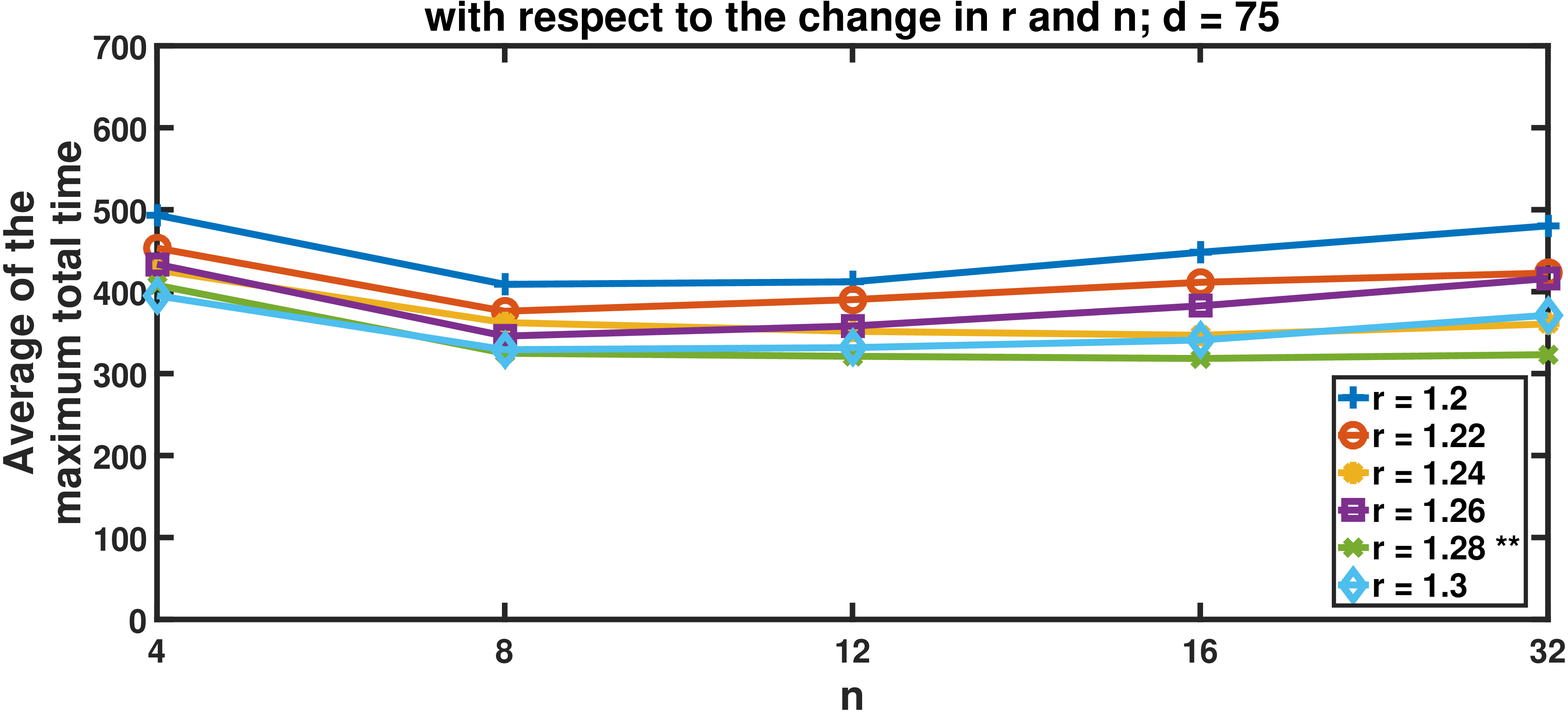}
\caption{Average total time.}
\label{fig:simulationsgrp3}
\end{figure}

\begin{figure}[!ht]
\centering
\includegraphics[width=0.49\columnwidth]{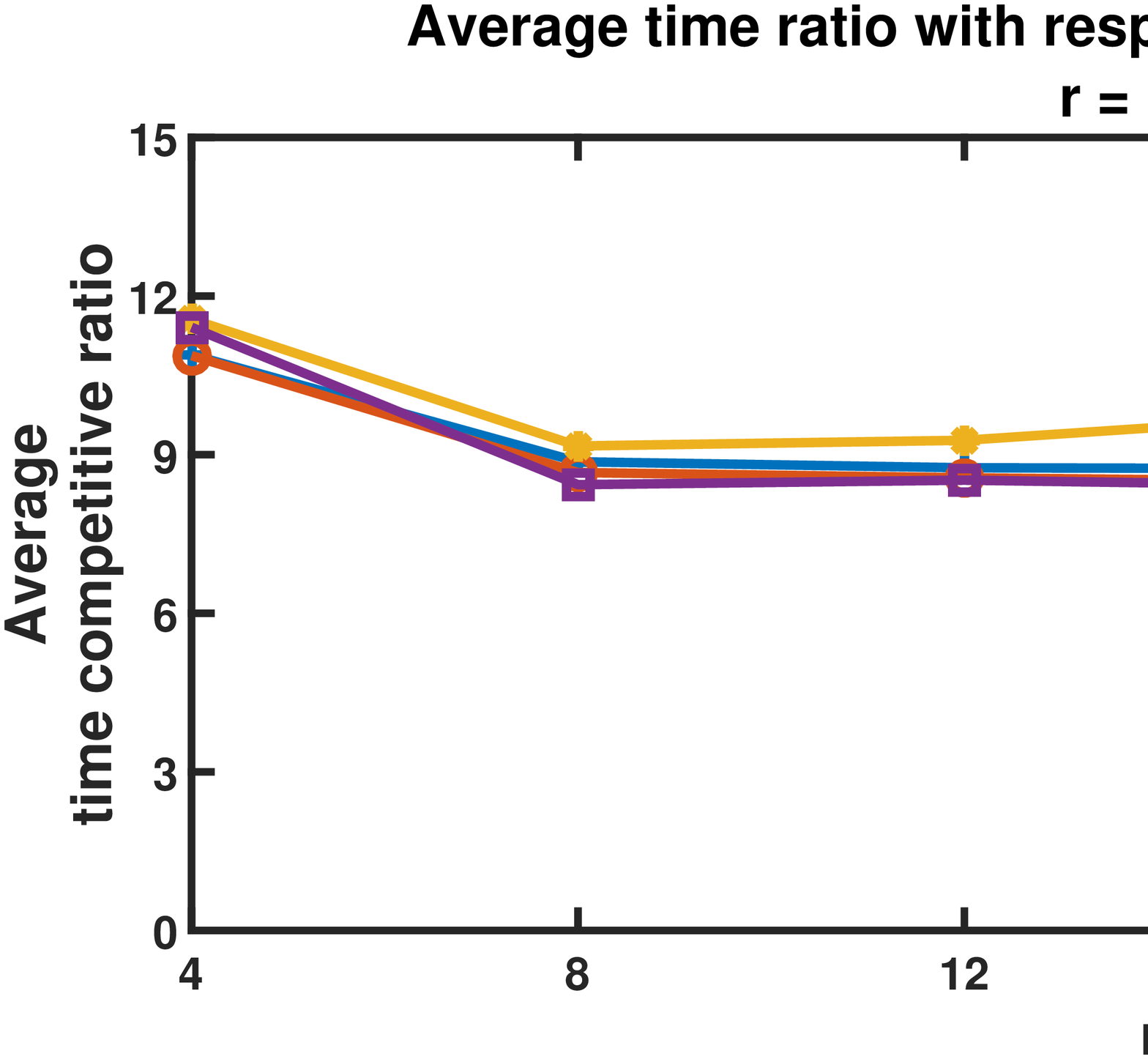}
\includegraphics[width=0.49\columnwidth]{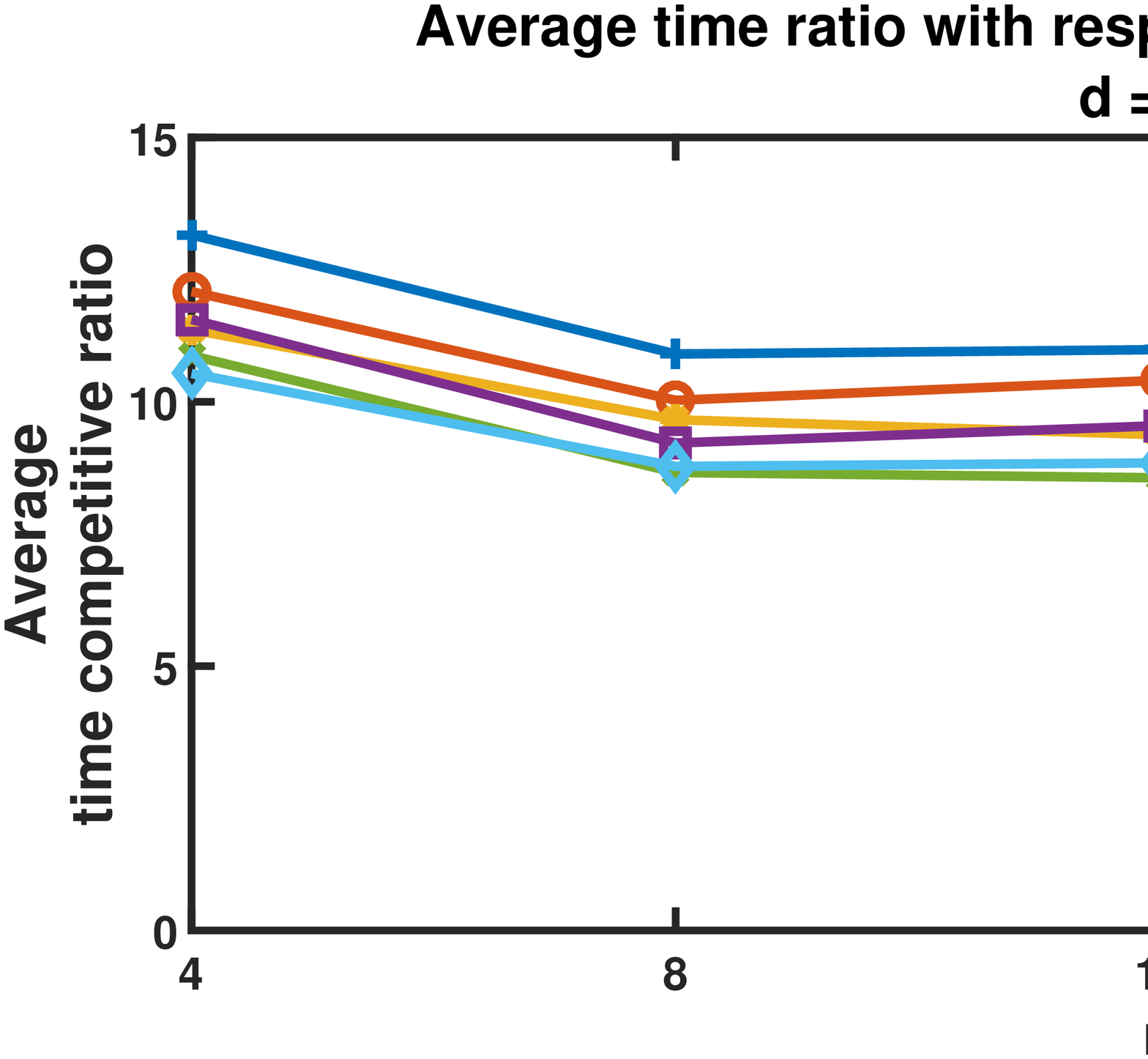}
\caption{Average time competitive ratio.}
\label{fig:simulationsgrp4}
\end{figure}

\begin{figure}[!ht]
\centering
\includegraphics[width=0.49\columnwidth]{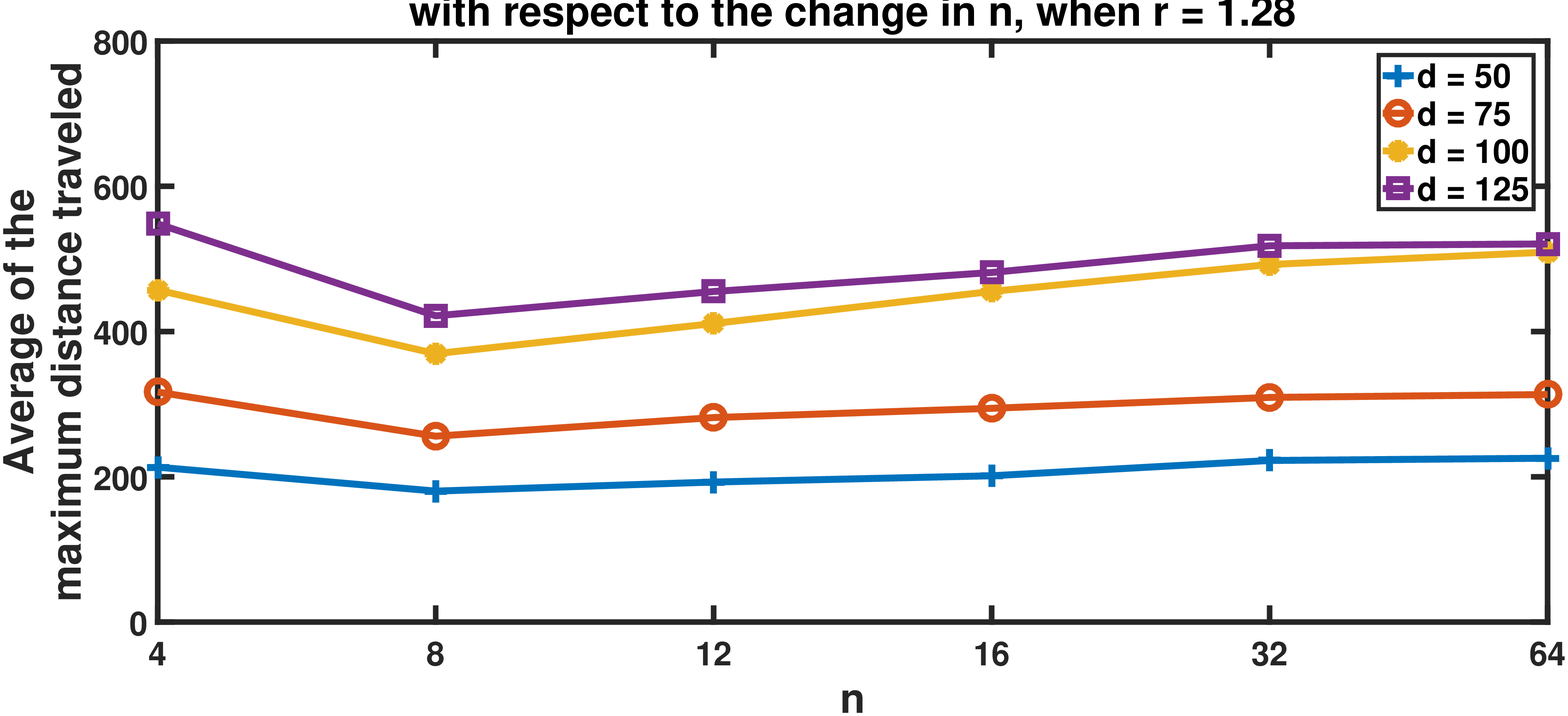}
\includegraphics[width=0.49\columnwidth]{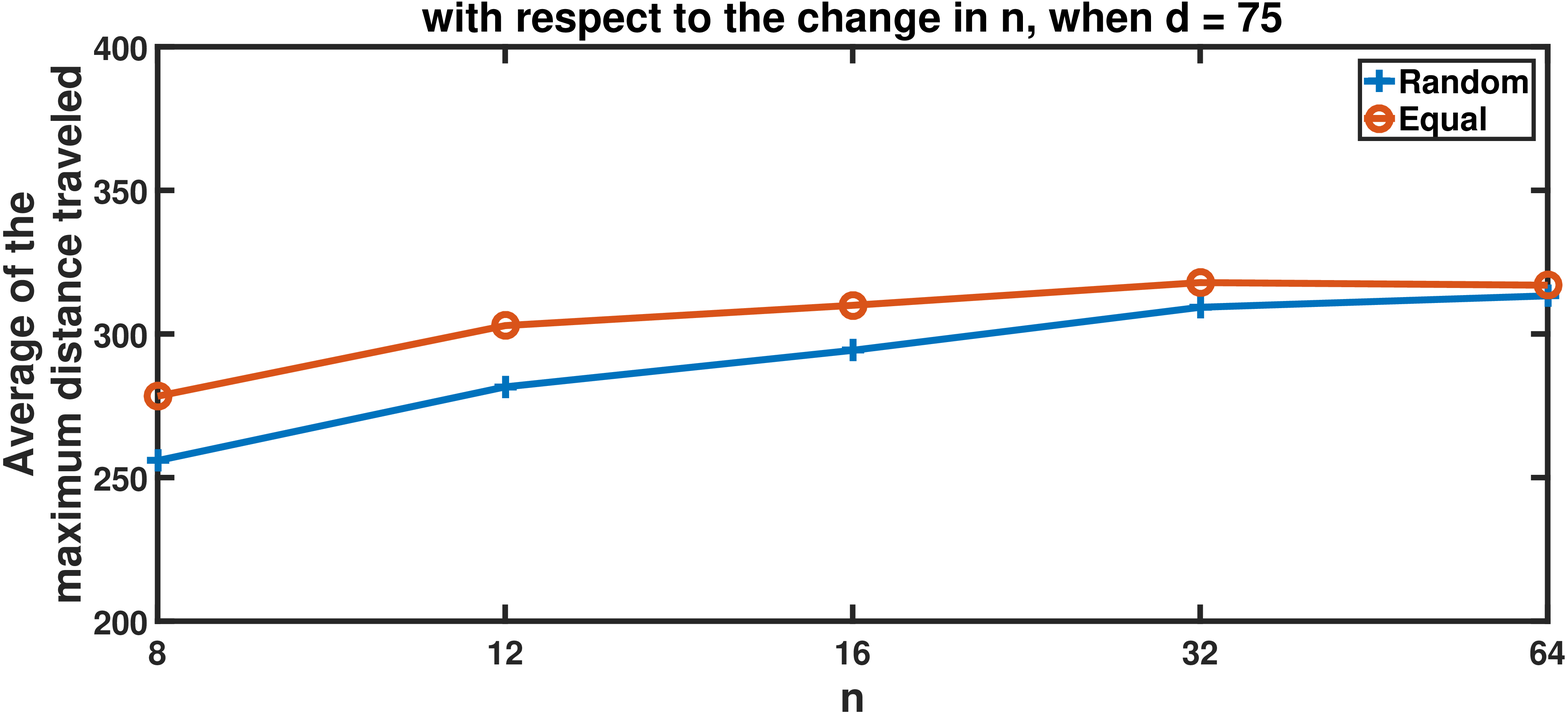}
\caption{Average distance traveled.}
\label{fig:simulationsgrp5}
\end{figure}
\section{Conclusion}
\label{sec:conclusion}

This paper addresses the symmetric rendezvous search problem with multiple robots on the line. In our problem formulation, the initial distance between any pair of robots is unknown to the robots. Moreover, the robots do not know their own positions and the positions of each other. For this problem, we introduced an online algorithm whose competitive ratio is $57.732$ which in the asymptotic case becomes $34.154$.

The algorithm presented here can be extended to the asynchronous case. In the asynchronous case, the robots do not need to start at the same time and use waiting time within the rounds. Our immediate future work is to present this extension and provide its theoretical performance guarantees and simulations.
\section*{Acknowledgement}

The authors would like to thank Christoph Lenzen of the Max Planck Institute for Informatics for his valuable suggestions.

\bibliographystyle{splncs04}
\bibliography{refs}

\end{document}